\newcommand{\putBibliography}{
	\bibliography{./library,/home/leshyk/PhD/library}
}
\theoremstyle{plain}
\newtheorem{corollary}{Corollary}
\newtheorem{definition}{Definition}
\newtheorem{proposition}{Proposition}
\newtheorem{example}{Example}
\newcommand{\setbegin}{\left\{}
\newcommand{\setend}{\right\}}
\newcommand{\setdlmt}{,}
\newcommand{\seqbegin}{\left<}
\newcommand{\seqend}{\right>}
\newcommand{\seqdlmt}{;}
\newcommand{\f}{\right.}
\newcommand{\s}{\left.}
\let\ifset\iffalse
\newcommand{\set}[1]{
	\let\iftmp=\ifset
	\let\ifset\iftrue
	\setbegin#1\setend
	\let\ifset=\iftmp
}
\newcommand{\seq}[1]{
	\let\iftmp=\ifset
	\let\ifset\iffalse
	\seqbegin#1\seqend
	\let\ifset=\iftmp
}
\newcommand{\+}{
\ifset\setdlmt\else\seqdlmt\fi
}
\newcommand{\pI}{\seq{\el{H_1}{a}\+\el{H_1}{c\+d}\+\el{H_1}{a\+b}\+\el{H_1}{d}}}
\newcommand{\pIRef}{p^1}
\newcommand{\pII}{\seq{\el{H_2}{c\+d}\+\el{H_3}{b\+d}\+\el{H_3}{a\+d}}}
\newcommand{\pIIRef}{p^2}
\newcommand{\pIII}{\seq{\el{H_4}{c\+d}\+\el{H_4}{b}\+\el{H_4}{a}\+\el{H_4}{a\+d}}}
\newcommand{\pIIIRef}{p^3}
\newcommand{\el}[2]{[#1,\set{#2}]}
\newcommand{\elnP}[1]{[#1]}
\newcommand{\ssI}{\seq{\el{CH}{c\+d}\+\el{H_1}{b}\+\el{*}{d}}}
\newcommand{\ssIRef}{ss^1}
\newcommand{\ssII}{\seq{\el{CH}{c,d}\+\el{*}{b}\+\el{*}{d}}}
\newcommand{\ssIIRef}{ss^2}
\newcommand{\ssIII}{\seq{\el{CH}{}\+\el{*}{d}\+\el{*}{a}}}
\newcommand{\ssIIIRef}{ss^3}
\newcommand{\ssIV}{\seq{\el{*}{c,d}\+\el{*}{b}}}
\newcommand{\ssIVRef}{ss^4}
\newcommand{\ssV}{\seq{\el{*}{a}}}
\newcommand{\ssVRef}{ss^5}
\newcommand{\ssVI}{\seq{\el{*}{c\+d}\+\el{CL}{b}\+\el{CL}{a}}}
\newcommand{\ssVIRef}{ss^6}
\newcommand{\ssVII}{\seq{\el{CL}{d}\+\el{CL}{}}}
\newcommand{\ssVIIRef}{ss^7}
\newcommand{\ssVIII}{\seq{\el{CL}{}\+\el{CL}{a\+d}}}
\newcommand{\ssVIIIRef}{ss^8}
\newcommand{\ssIX}{\seq{\el{CH}{c,d}}}
\newcommand{\ssIXRef}{ss^9}
\newcommand{\ssX}{\seq{\el{CL}{b}\+\el{CL}{a}}}
\newcommand{\ssXRef}{ss^{10}}
\newcommand{\ssXI}{\seq{\el{*}{c\+d}\+\el{*}{b}}}
\newcommand{\ssXIRef}{ss^{11}}
\newcommand{\ssXII}{\seq{\el{*}{a}\+\el{*}{d}}}
\newcommand{\ssXIIRef}{ss^{12}}
\newcommand{\extC}[1]{\mathtt{Ext}(#1)}
\newcommand{\intC}[1]{\mathtt{Int}(#1)}
\newcommand{\stab}[1]{\mathtt{Stab}(#1)}
\newcommand{\diff}{\Delta}
\newcommand{\bote}{\bot_E}
\newcommand{\sqcape}{\sqcap_E}
\newcommand{\sqsubseteqe}{\sqsubseteq_E}
\newcommand{\tbl}[2]{\caption{#1}{\begin{center}#2\end{center}}}
\newcommand{\toprule}{\hline\hline}
\newcommand{\colrule}{\hline}
\newcommand{\botrule}{\hline\hline}
\newcommand{\tikznamedpicture}[3][0]{
	\newcommand{#2}[#1]{ \begin{tikzpicture}#3\end{tikzpicture} }
}
\newcommand{\concept}[2]{$\left(#1;#2\right)$}
\tikznamedpicture{\putRETaxonomy}{[
	every node/.style = {draw,circle,font={\tiny}},
	node distance= 2mm and 5mm
]
	\node(h1){$H_1$};
	\node(h2)[right=of h1]{$H_2$};
	\node(h3)[right=of h2]{$H_3$};
	\node(h4)[right=of h3]{$H_4$};
	\node(ch)[above=of $(h1)!0.5!(h2)$]{CH}
		edge(h1)
		edge(h2);
	\node(cl)[above=of $(h3)!0.5!(h4)$]{CL}
		edge(h3)
		edge(h4);
	\node(top)[above=of $(ch)!0.5!(cl)$]{*}
		edge(ch)
		edge(cl);
}
\tikznamedpicture{\putIPSLattice}{[
	every node/.style={draw,rectangle,dotted,font={\tiny}},
	node distance= 0.5cm and 1cm
]
	\node(g1){
		\concept{\set{g_1}}{\seq{[1,3]\+[3,5]\+[2,4]}}
	};
	\node(g2) [right=of g1] {
		\concept{\set{g_2}}{\seq{[5,7],[4,6],[2,5]}}
	};
	\node(g3) [right=of g2] {
		\concept{\set{g_3}}{\seq{[1,9],[2,7],[6,6]}}
	};
	\node(g12)[above=of $(g1)!.5!(g2)$]{
		\concept{\set{g_1\+g_2}}{\seq{[1,7]\+[3,6]\+[2,5]}}
	}
		edge (g1)
		edge (g2);
	\node(g123) [above=of g12 -| g2]{
		\concept{\set{g_1\+g_2\+g_3}}{\seq{[1,9],[2,7],[2,6]}}
	}
		edge (g12)
		edge (g3);
	\node(bottom) [below=of g2] {
		\concept{\emptyset}{*}
	}
		edge (g1)
		edge (g2)
		edge (g3);
}
\tikznamedpicture{\putFCALattice}{[
	every node/.style={draw,rectangle,font={\bf}},
	node distance= 0.5cm and 0.5cm
	]
	\node(bottom){\concept{}{\set{m_1\+m_2\+m_3\+m_4}}};
	\node(g2g4)[above=of bottom] {\concept{g_2\+g_4}{\set{m_3\+m_4}}}
		edge (bottom);
	\node(g1)[left=of g2g4] {\concept{\set{g_1}}{\set{m_1\+m_4}}}
		edge (bottom);
	\node(g3) [right=of g2g4] {\concept{\set{g_3}}{\set{m_2}}}
		edge (bottom);
	\node (g1g2g4) [above=of $(g1)!0.5!(g2g4)$] {\concept{\set{g_1\+g_2\+g_4}}{\set{m_4}}}
		edge(g1)
		edge(g2g4);
	\node(top) [above=of g1g2g4 -| g2g4] {\concept{\set{g_1\+g_3\+g_2\+g_4}}{}}
		edge(g1g2g4)
		edge(g3);
}
\tikznamedpicture{\putSPSLattice}{[
	every node/.style={draw,rectangle,font={\tiny}},
	node distance= 0.5cm and 1.5cm
]
	\node(p2){
		\concept{\set{p^2}}{\pIIRef}
	};
	\node(p1)[left=of p2]{
		\concept{\set{p^1}}{\pIRef}
	};
	\node(p3)[right=of p2]{
		\concept{\set{p^3}}{\pIIIRef}
	};
	\node(p1p2)[above=of p1] {
		\concept{\set{p^1,p^2}}{\ssIIRef,\ssIIIRef}
	}
		edge (p1)
		edge (p2);
	\node(p1p3)[above=of p2] {
		\concept{\set{p^1,p^3}}{\ssXIRef,\ssXIIRef}
	}
		edge (p1)
		edge (p3);
	\node(p2p3)[above=of p3]{
		\concept{\set{p^2,p^3}}{\ssVIRef,\ssVIIRef,\ssVIIIRef}
	}
		edge (p2)
		edge (p3);
		
	\node(top)[above=of p1p3]{
		\concept{\set{p^1,p^2,p^3}}{\ssIVRef,\ssVRef}
	}
		edge (p1p2)
		edge (p1p3)
		edge (p2p3);
		
	\node(bottom)[below=of p2]{
		\concept{\emptyset}{*}
	}
		edge (p1)
		edge (p2)
		edge (p3);
}
\tikznamedpicture{\putSPSLatticeMLPIII}{[
	every node/.style={draw,rectangle,font={\tiny\bf}},
	node distance= 0.5cm and 0.3cm
	]
	\node(p2){
		\concept{\set{p^2}}{\pIIRef}
	};
	\node(p1)[left=of p2]{
		\concept{\set{p^1}}{\pIRef}
	};
	\node(p3)[right=of p2]{
		\concept{\set{p^3}}{\pIIIRef}
	};
	\node(p1p2)[above=of p1] {
		\concept{\set{p^1,p^2}}{\ssIIRef,\ssIIIRef}
	}
	edge (p1)
	edge (p2);
	\node(p2p3)[above=of p3]{
		\concept{\set{p^2,p^3}}{\ssVIRef}
	}
	edge (p2)
	edge (p3);
	
	\node(top)[above=of $(p1p2)!0.5!(p2p3)$]{
		\concept{\set{p^1,p^2,p^3}}{\emptyset}
	}
	edge (p1p2)
	edge (p2p3);
	
	\node(bottom)[below=of p2]{
		\concept{\emptyset}{*}
	}
	edge (p1)
	edge (p2)
	edge (p3);
}
\tikznamedpicture{\putSPSLatticeNoTypeRoot}{[
	every node/.style={draw,rectangle,font={\tiny}},
	node distance= 0.5cm and 0.3cm
	]
	\node(p2){
		\concept{\set{p^2}}{\pIIRef}
	};
	\node(p1)[left=of p2]{
		\concept{\set{p^1}}{\pIRef}
	};
	\node(p3)[right=of p2]{
		\concept{\set{p^3}}{\pIIIRef}
	};
	\node(p1p2)[above=of p1] {
		\concept{\set{p^1,p^2}}{\ssIXRef}
	}
	edge (p1)
	edge (p2);
	\node(p2p3)[above=of p3]{
		\concept{\set{p^2,p^3}}{\ssVIIRef,\ssVIIIRef,\ssXRef}
	}
	edge (p2)
	edge (p3);
	
	\node(top)[above=of $(p1p2)!0.5!(p2p3)$]{
		\concept{\set{p^1,p^2,p^3}}{\emptyset}
	}
	edge (p1p2)
	edge (p2p3);
	
	\node(bottom)[below=of p2]{
		\concept{\emptyset}{*}
	}
	edge (p1)
	edge (p2)
	edge (p3);
}
\tikznamedpicture{\putStabLattice}{[
	every node/.style={draw,rectangle,font={\tiny}},
	node distance= 0.5cm and 0.3cm
	]
	\node(g1){\concept{\set{g_1}}{*}\bf[0.5]};
	\node(g2)[right=of g1]{\concept{\set{g_2}}{*}\bf[0.5]};
	\node(g3)[right=of g2]{\concept{\set{g_3}}{*}\bf[0.5]};
	\node(g4)[right=of g3]{\concept{\set{g_4}}{*}\bf[0.5]};
	\node(g5)[right=of g4]{\concept{\set{g_5}}{*}\bf[0.5]};
	\node(bottom)[below=of g3] {\concept{\emptyset}{*}\bf[1.0]}
	edge(g1)
	edge(g2)
	edge(g3)
	edge(g4)
	edge(g5);
	\node(g1234)[above=of $(g2)!0.5!(g3)$,very thick] {\concept{\bf\set{g_1\+g_2\+g_3\+g_4}}{\bf\set{m_6}}\bf[0.69]}
	edge(g1)
	edge(g2)
	edge(g3)
	edge(g4);
	\node(top)[above=of g1234-|g3]{\concept{\set{g_1\+g_2\+g_3\+g_4\+g_5}}{*}\bf[0.47]}
	edge(g1234)
	edge(g5);
}
\begin{document}

\title{
	On mining complex sequential data by means of FCA and pattern structures
}

\author{
	Aleksey Buzmakov$^{\rm a,c}$$^{\ast}$\thanks{
		$^\ast$Corresponding author. Email: aleksey.buzmakov@inria.fr \vspace{6pt}
	} \and Elias Egho$^{\rm b,1}$\footnote{
		$^1$Elias Egho was in LORIA (Vandoeuvre-les-Nancy, France) when this work was done.
	} \and Nicolas Jay$^{\rm a}$
	\and Sergei O. Kuznetsov$^{\rm c}$
	\and Amedeo Napoli$^{\rm a}$ \and Chedy Raïssi$^{\rm a}$\\
}
\date{
	$^{a}${\em{Orpailleur, LORIA (CNRS -- Inria NGE -- U. de Lorraine), Vandoeuvre-lès-Nancy, France}}; \\
	$^{b}${\em{
		Orange Labs, 
		Lannion, France
	}} \\
	$^{c}${\em{National Research University Higher School of Economics, Moscow, Russia}}
}

\maketitle

\begin{abstract}
	Nowadays data sets are available in very complex and heterogeneous ways. Mining of such data collections is essential to support many real-world applications ranging from healthcare to marketing.  In this work, we focus on the analysis of \emph{``complex''} sequential data by means of interesting sequential patterns. We approach the problem using the elegant mathematical framework of Formal Concept Analysis (FCA) and its extension based on \emph{``pattern structures"}. Pattern structures are used for mining complex data (such as sequences or graphs) and are based on a subsumption operation, which in our case is defined with respect to the partial order on sequences. We show how pattern structures along with projections (i.e., a  data reduction of sequential structures), are able to enumerate more meaningful patterns and increase the computing efficiency of the approach. Finally, we show the applicability of the presented method for discovering and analyzing interesting patient patterns from a French healthcare data set on cancer. The quantitative and qualitative results (with annotations and analysis from a physician) are reported in this use case which is the main motivation for this work.

{\bf Keywords:}
	data mining;
	formal concept analysis;
	pattern structures;
	projections;
	sequences;
	sequential data.

\end{abstract}


\section{Introduction}\label{sect:introduction}

Sequence data is present and used in many applications.  Mining sequential patterns from sequence data has become an important data mining task.  In the last two decades, the main emphasis has been on developing efficient mining algorithms and effective pattern representations~\citep{Han2000,Pei2001,Yan2003,Ding2009,Raissi2008}. However, one problem with traditional sequential pattern mining algorithms (and generally with all pattern enumeration algorithms) is that they generate a large number of frequent sequences while a few of them are truly relevant. To tackle this challenge, recent studies try to enumerate patterns using some alternative interestingness measures or by sampling representative patterns. A general idea in finding \emph{statistically significant patterns} is to extract patterns whose characteristics for a given measure, such as frequency, strongly deviates from its expected value under a null model, i.e. the value expected by the distribution of all data. In this work, we focus on complementing the statistical approaches with a sound algebraic approach trying to answer the following question: \emph{can we develop a framework for enumerating only relevant patterns based on data lattices and its associated measures?}

The above question can be answered by addressing the problem of analyzing sequential data using the framework of Formal Concept Analysis (FCA), a mathematical approach to data analysis \citep{Ganter1999}, and pattern structures,
an extension of FCA that handles complex data \citep{Ganter2001}. 
To analyze a dataset of ``complex" sequences while avoiding the classical efficiency bottlenecks, we introduce and explain the usage of projections, which are mathematical mappings for defining approximations. Projections for sequences allow one to reduce the computational costs and the volume of enumerated patterns, avoiding the infamous ``pattern flooding''. In addition, we provide and discuss several measures, such as stability, to rank patterns with respect to their ``interestingness'', giving an expert order in which the patterns may be efficiently analyzed.

In this paper,  we develop a novel, rigorous and efficient approach for working with sequential pattern structures in formal concept analysis. The main contributions of this work can be summarized as follows:

\begin{itemize}
	\item
	\textit{Pattern structure specification and analysis.}
	We propose a novel way of dealing with sequences based on complex alphabets by mapping them to pattern structures. The genericity power provided by the pattern structures allows our approach to be directly instantiated with state-of-the-art FCA algorithms, making the final implementation flexible, accurate and scalable. 
	
	\item 
	\textit{``Projections" for sequential pattern structures}. Projections significantly decrease the number of patterns, while preserving the most interesting ones for an expert. Projections are built to answer questions that an expert may have. Moreover, combinations of projections and concept stability index provide an efficient tool for the analysis of complex sequential datasets. The second advantage of projections is its ability to significantly decrease the complexity of a problem, saving thus computational time.
	
	\item
	\textit{Experimental evaluations.}
	We evaluate our approach on real sequence dataset of a regional healthcare system. The data set contains ordered sets of hospitalizations for cancer patients with information about the hospitals they visited, causes for the hospitalizations and medical procedures. These ordered sets are considered as sequences. The experiments reveal interesting (from a medical point of view) and useful patterns, and show the feasibility and the efficiency of our approach. 
\end{itemize}

This paper is an extension of the work presented at CLA'14 conference~\citep{Buzmakov2013a}. The main differences w.r.t. the CLA'14 paper are a more complete explanation of the mathematical framework and a new experimental part evaluating different aspects of the introduced framework.

The paper is organized as follows. Section~\ref{sect:fca} introduces formal concept analysis and pattern structures. The specification of pattern structures for the case of sequences is presented in Section~\ref{sect:sps}. Section~\ref{sect:projections} describes projections of sequential pattern structures followed in Section~\ref{sect:evaluation} by the evaluation and experimentations. Finally, related works are discussed before concluding the paper.

\section{FCA and pattern structures} \label{sect:fca}
%

%



	\subsection{Formal concept analysis}\label{sect:fca-fca}
%

FCA is a formalism that can be used for guiding data analysis and knowledge discovery~\citep{Ganter1999}.
FCA starts with a formal context and builds a set of formal concepts organized within a concept lattice. A formal context is a triple $(G,M,I)$, where $G$ is a set of objects, $M$ is a set of attributes and $I$ is a relation between $G$ and $M$, $I\subseteq G \times M$.
In Table~\ref{tbl:fca-context}, a cross table for a formal context is shown.
A Galois connection between $G$ and $M$ is defined as follows:
\begin{align*}
	A' &= \{m \in M \mid \forall g \in A, (g,m) \in I\},& A \subseteq G\\
	B' &= \{g \in A \mid \forall m \in M, (g,m) \in I\},& B \subseteq M
\end{align*}
The  Galois connection maps a set of objects to the maximal set of attributes shared by all objects and reciprocally. For example, $\set{g_1\+g_2}'=\{m_4\}$, while $\set{m_4}'=\set{g_1\+g_2\+g_4}$, i.e. the set $\{g_1,g_2\}$ is not maximal. Given a set of objects $A$, we say that $A'$ is the description of $A$.

\begin{table}
	\tbl{A toy FCA context.}{
	\begin{tabular}{r|cccc}
		\toprule
		& $m_1$ & $m_2$ & $m_3$ & $m_4$ \\
		\colrule
		$g_1$
		& x & & & x \\
		$g_2$
		& & & x & x \\
		$g_3$
		& & x & & \\
		$g_4$
		& & & x & x \\
		\botrule
	\end{tabular}}
	\label{tbl:fca-context}
\end{table}
\begin{figure}
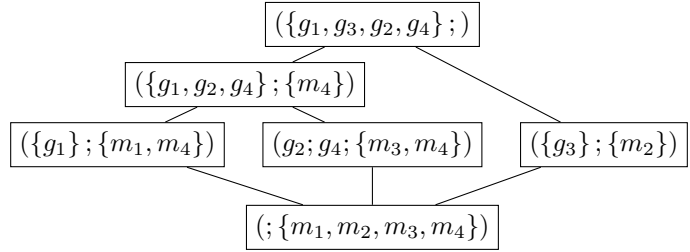

	\centering
	\putFCALattice	
	\caption{Concept Lattice for the toy context}
	\label{fig:fca-lattice}
\end{figure}

\begin{definition}
	A formal concept is a pair $(A,B)$, where $A \subseteq G$ is a subset of objects, $B \subseteq M$ is a subset of attributes, such that $A'=B$ and $A=B'$, where $A$ is called the extent of the concept, and $B$ is called the intent of the concept.
\end{definition}

A formal concept corresponds to a pair of maximal sets of objects and attributes, i.e. it is not possible to add an object or an attribute to the concept without violating the maximality property. For example a pair $(\set{g_1\+g_2\+g_4},\set{m_4})$ is a formal concept.
Formal concepts can be partially ordered w.r.t. the extent inclusion (dually, intent inclusion). For example, 
$(\set{g_1};\set{m_1\+m_4}) \leq (\set{g_1\+g_2\+g_4},\set{m_4}).$ This partial order of concepts is shown in Figure~\ref{fig:fca-lattice}. The number of formal concepts for a given context can be exponential w.r.t. the cardinality of set of objects or set of attributes. It is easy to see that for context $(G,G,I_G)$, where $I_G=\{(x,y) \mid x \in G, y\in G, x \neq y\}$, the number of concepts is equal to $2^{|G|}$.

	\subsection{Stability index of a concept}\label{sect:fca-stability}

The number of concepts in a lattice for real-world tasks can be large. To find the most interesting subset of concepts, different measures can be used such as the stability of the concept~\citep{KuznetsovStability2007} or the concept probability and separation~\citep{Klimushkin2010}. These measures help extracting the most interesting concepts. However, the last ones are less reliable in noisy data. 

\begin{definition}\label{def:stability}
	Given a concept $c$, the concept stability $\stab{c}$ of $c$ is the relative number of subsets of the concept extent (denoted $\extC{c}$), whose description, i.e. the result of $(\cdot)'$, is equal to the concept intent (denoted $\intC{c}$).
	\begin{equation}\label{eq:stability}
		\stab{c}:=\frac{|\{s\in\wp(\extC{c}) \mid s' = \intC{c}\}|}{|\wp(\extC{c})|}
	\end{equation}
\end{definition}

Here $\wp(P)$ is the powerset of $P$. Stability measures how a concept depends on objects in its extent. The larger the stability is the more combinations of objects can be deleted from the context without affecting the intent of the concept, i.e. the intent of the most stable concepts is likely to be a characteristic pattern of a given phenomenon and not an artifact of a dataset. Of course, stable concepts still depend on the dataset, and, consequently some important information can be contained in the unstable concepts. However, the stability can be considered as a good heuristic for selecting concepts because the more stable the concept is the less it depends on the given dataset w.r.t. to object removal.

\begin{example}
	Figure~\ref{fig:stability} shows a lattice for the context in Table~\ref{tbl:stability-context}, for simplicity some intents are not given. Extent of the outlined concept $c$ is $\extC{c}=\set{g_1\+g_2\+g_3\+g_4}$, thus, its powerset contains $2^4$ elements. Descriptions of 5 subsets of $\extC{c}$ ($\set{g_1},\dots,\set{g_4}$ and $\emptyset$) are different from $\intC{c}=\set{m_6}$, while all other subsets of $\extC{c}$ have a common description equal to $\set{m_6}$. So, $\stab{c}=\frac{2^4-5}{2^4}=0.69$.
\end{example}

\begin{table}
	\tbl{A toy formal context}{
	\begin{tabular}{l|cccccc}
		\toprule
		& $m_1$ & $m_2$ & $m_3$ & $m_4$ & $m_5$ & $m_6$ \\
		\colrule
		$g_1$ & x &&&&& x \\
		$g_2$ && x &&&& x \\
		$g_3$ &&& x &&& x \\
		$g_4$ &&&& x && x \\
		$g_5$ &&&&& x &   \\
		\botrule
	\end{tabular}}
	\label{tbl:stability-context}
\end{table}
\begin{figure}
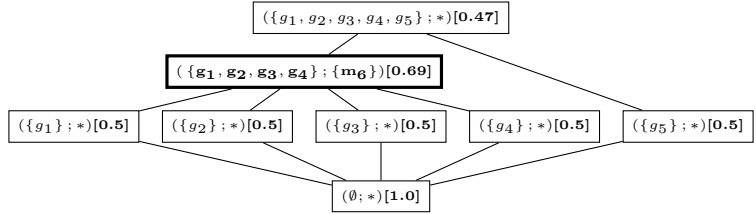

	\centering
	\putStabLattice
	\captionof{figure}{Concept Lattice for the context in Table~\ref{tbl:stability-context} with corresponding stability indexes.}
	\label{fig:stability}
\end{figure}

\newcommand{\DD}{\mathtt{DD}}

One of the fastest algorithm processing a concept lattice $L$ is proposed in~\citep{Roth2008} with the worst-case complexity of $O(|L|^2)$ where $|L|$ is the size of the concept lattice. The experimental section shows that for a big lattice, the stability computation can take much more time than the construction of the concept lattice.
Thus, the estimation of concept stability is an important question. Here we present an efficient way for such an estimation. It should be noticed that in a lattice the extent of any ancestor of a concept $c$ is a superset of the extent of $c$, while the extent of any descendant is a subset. Given a concept $c$ and an immediate descendant $d$, we have $\forall s \subseteq \extC{d}, s'' \subseteq \extC{d}$, which means that $s' \supseteq \intC{d} \supset \intC{c}$, i.e. $s' \neq \intC{c}$. Thus, we can exclude in the computation of the numerator of stability in~(\ref{eq:stability}) all subsets of the extent of a direct descendant $c$. Thus, the following bound holds: \begin{equation}\label{eq:stability-limits}
	\stab{c} \leq 1 - \underset{d \in \DD(c)}{\max}\frac{1}{2^{\diff(c,d)}},
\end{equation} where $\DD(c)$ is the set of all direct descendants and $\diff(c,d)$ is the set-difference between extent of  $c$ and extent of $d$, $\diff(c,d)=|\extC{c} \setminus \extC{d}|$. 

\begin{example}\label{ex:stability-diff}
	With help of~(\ref{eq:stability-limits}) we can find all stable concepts (and some unstable), i.e. the concepts with a high stability w.r.t. a threshold $\theta$.
	If $\theta=0.97$, we should compute for each concept $c$ in the lattice the following value $md(c)=\underset{d \in \DD(c)}{\min}\diff(c,d)$ and then select concepts verifying $md(c) \geq -\log(1-0.97)=5.06$.
\end{example}

	\subsection{Pattern structures}\label{sect:fca-pattern-structures}

Although FCA applies to binary contexts, more complex data such as sequences or graphs can be directly processed as well. For that, pattern structures were introduced in~\citet{Ganter2001}.
\begin{definition}
	A pattern structure is a triple $(G,(D,\sqcap),\delta)$, where $G$ is a set of objects, $(D,\sqcap)$ is a complete meet-semilattice of descriptions and $\delta: G \rightarrow D$ maps an object to a description.
\end{definition}

\newcommand{\sqcapb}{\sqcap}

The lattice operation in the semilattice ($\sqcap$) corresponds to the similarity between two descriptions. Standard FCA can be presented in terms of a pattern structure. In this case, $G$ is the set of objects, the semilattice of descriptions is $(\wp(M),\sqcapb)$ and a description is a set of attributes, with the $\sqcapb$ operation corresponding to the set intersection ($\wp(M)$ denotes the powerset of $M$). If $x=\set{a\+b\+c}$ and $y=\set{a\+c\+d}$ then $x \sqcapb y = x \cap y = \set{a\+c}$. The mapping $\delta: G \rightarrow \wp(M)$ is given by, $\delta(g)=\{m \in M \mid (g,m) \in I\}$, and returns the description for a given object as a set of attributes.

The Galois connection for a pattern structure $(G,(D,\sqcap),\delta)$ is defined as follows:
\begin{align*}
	A^\diamond &:= \underset{g \in A}{\bigsqcap}\delta(g), &\text{for } A \subseteq G\\
	d^\diamond &:= \{g \in G \mid d \sqsubseteq \delta(g)\}, &\text{for } d \in D
\end{align*}

The Galois connection makes a correspondence between sets of objects and descriptions. Given a subset of objects $A$, $A^\diamond$ returns the description which is common to all objects in $A$. Given a description $d$, $d^\diamond$ is the set of all objects whose description subsumes $d$.
More precisely, the partial order  (or the subsumption order) on $D$ ($\sqsubseteq$) is defined w.r.t. the similarity operation $\sqcap$:
$c \sqsubseteq d \Leftrightarrow c \sqcap d = c$, and $c$ is subsumed by $d$. 

\begin{definition}
	A pattern concept of a pattern structure $(G,(D,\sqcap),\delta)$ is a pair $(A,d)$ where $A \subseteq G$ and $d \in D$ such that $A^\diamond = d$ and $d^\diamond = A$, $A$ is called the concept extent and $d$ is called the concept intent.
\end{definition}

As in standard FCA, a pattern concept corresponds to the maximal set of objects $A$ whose description subsumes the description $d$, where $d$ is the maximal common description for objects in $A$. 
The set of all concepts can be partially ordered w.r.t. partial order on extents (dually, intent patterns, i.e~$\sqsubseteq$), within a concept lattice.

An example of pattern structures is given in Table~\ref{tbl:sps-ex-patients}, while the corresponding lattice is depicted in Figure~\ref{fig:sps-lattice}.

As stability of concepts only depends on extents, it can be defined by the same procedure for both formal contexts and pattern structures.

\section{Sequential pattern structures}\label{sect:sps}

Certain phenomena, such as a patient trajectory (clinical history), can be considered as a sequence of events. This section describes how FCA and pattern structures can process sequential data.

	\subsection{An example of sequential data}\label{sect:sps-example}

\begin{table}
	\tbl{Toy sequential data on patient medical trajectories.}{
	\begin{tabular}{c|l}
		\toprule
		Patient & Trajectory \\
		\colrule
		$p^1$
		& $\pI$ \\
		$p^2$
		& $\pII$ \\
			$p^3$
		& $\pIII$ \\
		\botrule
	\end{tabular}}
	\label{tbl:sps-ex-patients}
\end{table}

Imagine that we have medical trajectories of patients, i.e. sequences of hospitalizations, where every hospitalization is described by a hospital name and a set of procedures. An example of sequential data on medical trajectories with three patients is given in Table~\ref{tbl:sps-ex-patients}. We have a set of procedures $P=\set{a\+b\+c\+d}$, a set of hospital names $T_H=\{H_1,H_2,H_3,H_4,CL,CH,*\}$, where hospital names are hierarchically organized (by level of generality). $H_1$ and $H_2$ are central hospitals ($CH$), $H_3$ and $H_4$ are clinics ($CL$), and $*$ denotes the root of this hierarchy. The least common ancestor in this hierarchy is denoted by $h_1 \sqcap h_2$, for any $h_1,h_2 \in T_H$, i.e. $H_1 \sqcap H_2 = CH$. Every hospitalization is described by one hospital name and may contain several procedures. The procedure order in each hospitalization is not important in our case. For example, the first hospitalization $\el{H_2}{c\+d}$ for the second patient ($p^2$) was a stay in hospital $H_2$ and during this hospitalization the patient underwent procedures $c$ and $d$.
An important task is to find the ``characteristic'' sequences of procedures and associated hospitals in order to improve hospitalization planning,  optimize clinical processes or detect anomalies.

We approach the search for characteristic sequences by finding the most stable concepts in the lattice corresponding to a sequential pattern structure. For the simplification of calculations, subsequences are considered without ``gaps'', i.e the order of non consequent elements is not taken into account. This is reasonable in this task because experts are interested in regular consecutive events in healthcare trajectories.
A sequential pattern structure is a set of sequences and is based on the set of maximal common subsequences (without gaps) between two sequences. Next subsections define partial order on sequences and the corresponding pattern structures.

	\subsection{Partial order on complex sequences}\label{sect:sps-partial-order}

A sequence is  constituted of elements from an alphabet. The classical subsequence matching task requires no special properties of the alphabet. Several generalizations of the classical case were made by introducing a subsequence relation based on an itemset alphabet~\citep{Agrawal1995} or on a multidimensional and multilevel alphabet~\citep{Plantevit2010}. Here, we generalize the previous cases, requiring for an alphabet to form a semilattice $(E,\sqcape)$
	(We should note that in this paper we consider two semilattices, the first one is related to the characters of the alphabet, $(E,\sqcape)$, and the second one is related to pattern structures, $(D,\sqcap)$).
	Thanks to the formalism of pattern structures we are able to process in a unified way all types of sequential datasets with poset-shaped alphabet (it is mentioned above that any partial order can be transformed into a semilattice). However, some sequential data can have connections between elements, e.g.~\citep{Adda2010}, and, thus, cannot be straightforwardly processed by our approach.

\begin{definition}\label{def:sequence}
	Given a semilattice $(E,\sqcape)$, also called an alphabet, a sequence is an ordered list of elements from $E$. We denote it by $\seq{e_1\+e_2\+\cdots\+e_n}$ where $e_i \in E$.
\end{definition}

In this alphabet semilattice $(E,\sqcape)$ there is a bottom element $\bote$ that can be matched with any other element. Formally, $\forall e \in E, \bote = \bote \sqcape e$. This element is required by the lattice structure, but provides no useful information. Thus, it should be excluded from sequences. The bottom element of $E$ corresponds to the empty set in sequential mining~\citep{Agrawal1995}, and the empty set is always ignored in this domain.

\begin{definition}\label{def:valid-sequence}
	A valid sequence $\seq{e_1\+\cdots\+e_n}$ is a sequence where $e_i \neq \bote$ for all $i \in \{1,\cdots,n\}$ .
\end{definition}

\begin{definition}\label{def:subsequence}
	Given an alphabet $(E,\sqcap_E)$ and two sequences $t=\seq{t_1\+...\+t_k}$ and $s=\seq{s_1\+...\+s_n}$ based on $E$ ($t_q,s_p \in E$), the sequence $t$ is a subsequence of $s$, denoted $t \leq s$, iff $k \leq n$ and there exist $j_1,..j_k$ such that $1 \leq j_1 < j_2 < ... <j_k \leq n$ and for all $i \in \set{1\+2\+...\+k}$, $t_i \sqsubseteqe s_{j_i}$, i.e. $t_i \sqcap_E s_{j_i} =t_i$. 
\end{definition}

\begin{example}
	In the running example (Section~\ref{sect:sps-example}), the alphabet is $E=T_H \times \wp(P)$ with the similarity operation $(h_1,P_1)\sqcap (h_2,P_2) = (h_1 \sqcap h_2, P_1 \cap P_2)$, where $h_1,h_2 \in T_H$ are hospitals and $P_1,P_2 \in \wp(P)$ are sets of procedures. Thus, the sequence $\ssIRef=\ssI$ is a subsequence of $\pIRef=\pI$ because if we set $j_i = i + 1$ (Definition~\ref{def:subsequence}) then 
	$ss^1_1 \sqsubseteq p^1_{j_1}$ (`CH' is more general than $H_1$ and $\set{c\+d} \subseteq \set{c\+d}$), 
	$ss^1_2 \sqsubseteq p^1_{j_2}$ (the same hospital and $\set{b} \subseteq \set{b\+a}$) and 
	$ss^1_3 \sqsubseteq p^1_{j_3}$ (`*' is more general than $H_1$ and $\set{d} \subseteq \set{d}$). 
\end{example}

With complex sequences and this kind of subsequence relation the computation can be hard. Thus, for the sake of simplification, only ``contiguous'' subsequences are considered, where only the order of consequent elements is taken into account, i.e. given $j_1$ in Definition~\ref{def:subsequence}, $j_i = j_{i-1}+1$ for all $i \in \set{2\+3\+...\+k}$. Since experts are interested in regular consecutive events in healthcare trajectories, such a restriction does make sens for our data. It helps to connect only related hospitalizations. 

The next section introduces pattern structures that are based on complex sequences with a general subsequence relation, while the experiments are provided for a ``contiguous'' subsequence relation.

	\subsection{Sequential meet-semilattice}\label{sect:sps-seq-semilattice}

Based on the previous definitions, we can define the sequential pattern structure used for representing and managing sequences. For that, we make an analogy with the pattern structures for graphs~\citep{Kuznetsov1999} where the meet-semilattice operation $\sqcap$ respects subgraph isomorphism. Thus, we introduce a sequential meet-semilattice respecting subsequence relation. Given an alphabet lattice $(E,\sqcape)$, $\mathfrak{S}$ is the set of all valid sequences based on $(E,\sqcape)$. $\mathfrak{S}$ is partially ordered w.r.t. Definition~\ref{def:subsequence}. $(D,\sqcap)$ is a semilattice on $\mathfrak{S}$, where $D \subseteq \wp(\mathfrak{S})$ such that, if $d \in D$ contains a sequence $s$, then all subsequences of $s$ should be included into $d$, $\forall s \in d, \nexists \tilde{s} \leq s: \tilde{s} \notin d$, and the similarity operation is the set intersection for two sets of sequences. Given two patterns $d_1,d_2 \in D$, the set intersection operation ensures that if a sequence $s$ belongs to $d_1 \sqcap d_2$ then any subsequence of $s$ belongs to $d_1 \sqcap d_2$ and thus $d_1 \sqcap d_2 \in D$. As the set intersection operation is idempotent, commutative and associative,  $(D,\sqcap)$ is a semilattice.

\begin{example}
	If pattern $d_1 \in D$ includes sequence $ss^4=\seq{[*,\set{c\+d}]\+[*,\set{b}]}$ (see Table~\ref{tbl:sps-ex-common-ss}), then it should include also $\seq{[*,\set{d}]\+[*,\set{b}]}$, $\seq{[*,\set{c\+d}]}$, $\seq{[*,\set{d}]}$ and others. If pattern $d_2 \in D$ includes $ss^{12}=\seq{[*,\set{a}]\+[*,\set{d}]}$, then it should include $\seq{[*,\set{a}]}$, $\seq{[*,\set{d}]}$ and $\seq{}$. Thus the intersection of two sets $d_1$ and $d_2$ is equal to the set $\set{\seq{[*,\set{d}]},\seq{}}$.
\end{example}

The next proposition stems from the aforementioned and will be used in the proofs in the next section.

\begin{proposition}\label{prop:sequential-subsumption}
	Given $(G,(D,\sqcap),\delta)$ and $x,y \in D$, $x \sqsubseteq y$ if and only if $\forall s^x \in x$ there is a sequence $s^y \in y$, such that $s^x \leq s^y$.
\end{proposition}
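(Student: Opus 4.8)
The plan is to reduce the subsumption order $\sqsubseteq$ on $D$ to ordinary set inclusion, and then to exploit the defining downward-closure property of the elements of $D$. First I would observe that, since the meet $\sqcap$ on $D$ is set intersection, the induced order is exactly set inclusion: $x \sqsubseteq y$ iff $x \sqcap y = x$ iff $x \cap y = x$ iff $x \subseteq y$. This recasts the whole statement as a claim about when one (downward-closed) set of sequences is contained in another, so the proof splits into the two inclusions of the biconditional.

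For the forward direction I would assume $x \sqsubseteq y$, hence $x \subseteq y$. Then for any $s^x \in x$ we already have $s^x \in y$, and by reflexivity of the subsequence relation, $s^x \leq s^x$, so the witness $s^y := s^x$ does the job. This direction is essentially immediate; the only thing being used beyond set inclusion is that $\leq$ is reflexive.

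For the converse I would assume that every $s^x \in x$ is dominated by some $s^y \in y$ with $s^x \leq s^y$, and aim to show $x \subseteq y$. Fix an arbitrary $s^x \in x$ and take the witnessing $s^y \in y$. Here is where the structural assumption on $D$ enters: by the definition of $D$, every one of its elements contains all subsequences of each of its members, so $y$ is downward closed under $\leq$. Since $s^x \leq s^y$ and $s^y \in y$, downward closure forces $s^x \in y$. As $s^x$ was arbitrary, $x \subseteq y$, i.e. $x \sqsubseteq y$.

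I do not expect a genuine obstacle here, since both inclusions are short. The only point requiring care is to invoke the downward-closure property of $D$ in the converse direction---rather than trying to argue directly from the semilattice operation---and, symmetrically, to be explicit that the forward direction relies on reflexivity of $\leq$ to produce the required witness.
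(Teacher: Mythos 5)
Your proof is correct and takes essentially the route the paper intends: the paper gives no explicit proof (it merely notes the proposition ``stems from the aforementioned''), and your reduction of $\sqsubseteq$ to set inclusion via $x \sqcap y = x \cap y$, combined with the downward-closure of every $d \in D$ under the subsequence relation $\leq$, is precisely the implicit justification. Both directions are handled correctly, including the reflexivity of $\leq$ for the forward witness.
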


The set of all possible subsequences for a given sequence can be large. Thus, it is more efficient to consider a pattern $d \in D$ as a set of only maximal sequences $\tilde{d}$, $\tilde{d}=\{s \in d \mid \nexists s^* \in d: s^* \geq s\}$. Furthermore, every pattern will be given only by the set of all maximal sequences. For example, $\set{p^2} \sqcap \set{p^3} = \set{ss^6\+ss^7\+ss^8}$ (see Tables~\ref{tbl:sps-ex-patients}~and~\ref{tbl:sps-ex-common-ss}), i.e. $\set{ss^6\+ss^7\+ss^8}$ is the set of all maximal sequences specifying the intersection of $p^2$ and $p^3$. Similarly we have $\set{ss^6\+ss^7\+ss^8} \sqcap \set{p^1}= \set{ss^4\+ss^5}$. Note that representing a pattern by the set of all maximal sequences allows for an efficient implementation of the intersection ``$\sqcap$'' of two patterns (in Section~\ref{sect:evaluation-implementation} we give more details on similarity operation w.r.t. a contiguous subsequence relation).

\begin{example}
	The sequential pattern structure for our example (Subsection~\ref{sect:sps-example}) is $(G,(D,\sqcap),\delta)$, where $G=\set{p^1,p^2,p^3}$, $(D,\sqcap)$ is the semilattice of sequential descriptions, and $\delta$ is the mapping associating an object in $G$ to a description in $D$ shown in Table~\ref{tbl:sps-ex-patients}.
	Figure~\ref{fig:sps-lattice} shows the resulting lattice of sequential pattern concepts for this particular pattern structure $(G,(D,\sqcap),\delta)$.
\end{example}

\begin{figure}
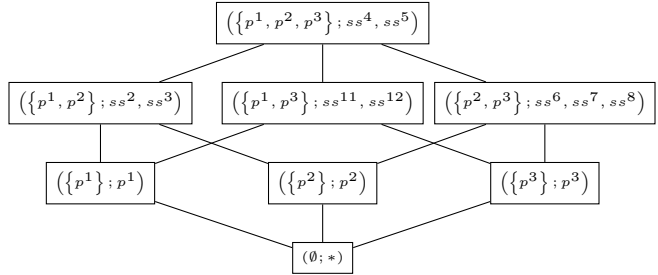

\centering
\putSPSLattice
\caption{
	The concept lattice for the pattern structure given by Table~\ref{tbl:sps-ex-patients}. Concept intents reference to sequences in Tables~\ref{tbl:sps-ex-patients} and~\ref{tbl:sps-ex-common-ss}.
}
\label{fig:sps-lattice}
\end{figure}
\begin{table}
	\tbl{Subsequences of patient sequences in Table~\ref{tbl:sps-ex-patients}.}
	{\begin{tabular}{c|l}\toprule
		& Subsequences \\
		\colrule
		$\ssIRef$
		& $\ssI$ 
		\\
		$\ssIIRef$
		& $\ssII$ 
		\\
		
		$\ssIIIRef$
		& $\ssIII$
		\\
		$\ssIVRef$
		& $\ssIV$
		\\
		
		$\ssVRef$
		& $\ssV$
		\\
		$\ssVIRef$
		& $\ssVI$
		\\
		
		$\ssVIIRef$
		& $\ssVII$
		\\
		$\ssVIIIRef$
		& $\ssVIII$
		\\
		
		$\ssIXRef$
		& $\ssIX$
		\\
		$\ssXRef$
		& $\ssX$
		\\
		
		$\ssXIRef$
		& $\ssXI$
		\\
		$\ssXIIRef$
		& $\ssXII$
		\\
		\botrule	
	\end{tabular}}
	\label{tbl:sps-ex-common-ss}
\end{table}

\section{Projections of sequential pattern structures}\label{sect:projections}

Pattern structures are hard to process due to the large number of concepts in the concept lattice, the complexity of the involved descriptions and the similarity operation. Moreover, a given pattern structure can produce a lattice with a lot of patterns which are not interesting for an expert. \textit{Can we save computational time by avoiding to compute ``useless'' patterns?} Projections of pattern structures ``simplify'' to some degree the computation and allow one to work with a reduced description. In fact, projections can be considered as filters on patterns respecting mathematical properties. These properties ensure that the projection of a semilattice is a semilattice and that projected concepts are related to original ones~\citep{Ganter2001}. Moreover, the stability measure of projected concepts never decreases w.r.t the original concepts.
We introduce projections on sequential patterns revising~\citet{Ganter2001}.
It is necessary to provide an extended definition of projection in order to deal with interesting projections for real-world sequential datasets.

\begin{definition}[\citet{Ganter2001}]\label{def:projection}
	A projection $\psi: D \rightarrow D$ is an interior operator, i.e. it is (1)~monotone ($x \sqsubseteq y \Rightarrow \psi(x) \sqsubseteq \psi(y)$), (2)~contractive ($\psi(x) \sqsubseteq x$) and (3)~idempotent ($\psi(\psi(x))=\psi(x)$).
\end{definition}

\begin{definition}\label{def:projected-PS}
 A projected pattern structure $\psi((G,(D,\sqcap),\delta))$ is a pattern structure $(G,(D_\psi,\sqcap_\psi),\psi \circ \delta)$, where $D_\psi=\psi(D)=\{d\in D \mid \exists d^* \in D: \psi(d^*) = d\}$ and $\forall x,y \in D, x \sqcap_\psi y := \psi( x \sqcap y )$. 
\end{definition}

Note that in~\citep{Ganter2001} $\psi((G,(D,\sqcap),\delta))=(G,(D,\sqcap),\psi \circ \delta)$. Our definition allows one to use a wider set of projections. In fact all projections that we describe for sequential pattern structures below require Definition~\ref{def:projected-PS}. Now we should show that $(D_\psi, \sqcapb_\psi)$ is a semilattice.

\newcommand{\sqcappsi}{\sqcap_\psi}

\begin{proposition}\label{prop:psi(xy)=psi(psi(x)y)}
	Given a semilattice $(D,\sqcap)$ and a projection $\psi$, for all $x,y \in D$ $\psi(x \sqcap y) = \psi( \psi(x) \sqcap y )$.
\end{proposition}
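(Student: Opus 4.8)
The plan is to prove the equality by establishing the two subsumption inequalities $\psi(\psi(x) \sqcap y) \sqsubseteq \psi(x \sqcap y)$ and $\psi(x \sqcap y) \sqsubseteq \psi(\psi(x) \sqcap y)$ separately, and then invoking antisymmetry of the partial order $\sqsubseteq$. Throughout I would use the three defining properties of a projection from Definition~\ref{def:projection} (monotone, contractive, idempotent) together with the fact that the meet $\sqcap$ of a semilattice is the greatest lower bound, and hence monotone in each argument: whenever $a \sqsubseteq b$ one has $a \sqcap c \sqsubseteq b \sqcap c$.

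For the first (easy) inequality, I would start from contractivity, $\psi(x) \sqsubseteq x$, apply monotonicity of the meet to get $\psi(x) \sqcap y \sqsubseteq x \sqcap y$, and then apply monotonicity of $\psi$ to obtain $\psi(\psi(x) \sqcap y) \sqsubseteq \psi(x \sqcap y)$.

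For the reverse inequality I would first bound $\psi(x \sqcap y)$ below both $\psi(x)$ and $y$. Since $x \sqcap y \sqsubseteq x$, monotonicity of $\psi$ gives $\psi(x \sqcap y) \sqsubseteq \psi(x)$; since $x \sqcap y \sqsubseteq y$, monotonicity gives $\psi(x \sqcap y) \sqsubseteq \psi(y)$, and contractivity gives $\psi(y) \sqsubseteq y$, so by transitivity $\psi(x \sqcap y) \sqsubseteq y$. As the meet is the greatest lower bound, these two bounds combine to $\psi(x \sqcap y) \sqsubseteq \psi(x) \sqcap y$. Applying $\psi$ and using monotonicity yields $\psi(\psi(x \sqcap y)) \sqsubseteq \psi(\psi(x) \sqcap y)$, and idempotency collapses the left-hand side to $\psi(x \sqcap y)$, giving the desired bound.

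The main obstacle, to the extent there is one, is the reverse inequality, where idempotency is essential: without re-applying $\psi$ to the lower bound $\psi(x \sqcap y) \sqsubseteq \psi(x) \sqcap y$ and then collapsing $\psi \circ \psi$ to $\psi$, one cannot pass from a bound \emph{on} $\psi(x \sqcap y)$ to a bound \emph{involving} $\psi(\psi(x) \sqcap y)$. All remaining steps are routine manipulations combining monotonicity, contractivity, and the greatest-lower-bound property of $\sqcap$.
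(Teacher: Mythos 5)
Your proof is correct and follows essentially the same route as the paper's: both establish $\psi(\psi(x)\sqcap y)\sqsubseteq\psi(x\sqcap y)$ from contractivity plus monotonicity of $\psi$ and of the meet, and then get the reverse bound by showing $\psi(x\sqcap y)\sqsubseteq\psi(x)\sqcap y$, applying $\psi$, and collapsing with idempotency. The only cosmetic difference is that you invoke the greatest-lower-bound property of $\sqcap$ abstractly where the paper verifies $\psi(x\sqcap y)\sqcap\psi(x)\sqcap y=\psi(x\sqcap y)$ by direct computation.
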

\begin{proof}
	\begin{enumerate}
		\item
		$\psi(x) \sqsubseteq x$, thus, $x,y \sqsupseteq (x \sqcap y) \sqsupseteq (\psi(x) \sqcap y) \sqsupseteq \psi(\psi(x) \sqcap y )$
		\item 
		$x \sqsubseteq y \Rightarrow \psi(x) \sqsubseteq \psi(y)$, thus, $\psi(x \sqcap y) \sqsupseteq \psi(\psi(x) \sqcap y)$
		\item 
		$\psi(x \sqcap y) \sqcap \psi(x) \sqcap y
		\underset{\psi(x \sqcap y) \sqsubseteq \psi(x)}{=}
		\psi(x \sqcap y) \sqcap y
		\underset{\psi(x \sqcap y) \sqsubseteq y}{=}
		\psi(x \sqcap y)$,\\
		then $(\psi(x) \sqcap y) \sqsupseteq \psi(x \sqcap y)$ and $\psi(\psi(x) \sqcap y) \sqsupseteq \psi(\psi(x \sqcap y)) = \psi(x \sqcap y)$
		\item 
		From (2) and (3) it follows that $\psi(x \sqcap y) = \psi( \psi(x) \sqcap y )$.
	\end{enumerate}
\end{proof}

\begin{corollary}\label{cor:psi-sqcap=sqcap-psi}
	$X_1 \sqcappsi X_2 \sqcappsi \cdots \sqcappsi X_N = \psi(X_1 \sqcap X_2 \sqcap \cdots \sqcap X_N)$
\end{corollary}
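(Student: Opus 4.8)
The plan is to argue by induction on $N$, using the definition of $\sqcappsi$ together with Proposition~\ref{prop:psi(xy)=psi(psi(x)y)} as the engine that collapses the nested projections into a single outer one.

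For the base case $N=2$, the claim $X_1 \sqcappsi X_2 = \psi(X_1 \sqcap X_2)$ is nothing more than the definition of $\sqcappsi$ given in Definition~\ref{def:projected-PS}. For the inductive step, suppose the identity holds for $N-1$ factors. Writing the $N$-fold projected meet with its last operation exposed, $X_1 \sqcappsi \cdots \sqcappsi X_N = (X_1 \sqcappsi \cdots \sqcappsi X_{N-1}) \sqcappsi X_N$, I would first unfold the outermost $\sqcappsi$ by its definition to obtain $\psi\big((X_1 \sqcappsi \cdots \sqcappsi X_{N-1}) \sqcap X_N\big)$, then substitute the inductive hypothesis to replace the inner product by $\psi(X_1 \sqcap \cdots \sqcap X_{N-1})$. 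This yields an expression of the form $\psi(\psi(Y) \sqcap X_N)$ with $Y = X_1 \sqcap \cdots \sqcap X_{N-1}$, to which Proposition~\ref{prop:psi(xy)=psi(psi(x)y)} applies directly, giving $\psi(Y \sqcap X_N) = \psi(X_1 \sqcap \cdots \sqcap X_N)$, as required.

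The only point that needs care is the implicit associativity of $\sqcappsi$ on the left-hand side, since the displayed equation in Corollary~\ref{cor:psi-sqcap=sqcap-psi} writes the iterated projected meet without parentheses. I would address this by noting that the induction above in fact establishes the identity for the left-nested bracketing, and that the right-hand side $\psi(X_1 \sqcap \cdots \sqcap X_N)$ is parenthesization-independent because $\sqcap$ is associative on the semilattice $(D,\sqcap)$. Running the same argument on any other bracketing of the left-hand side therefore produces the same value $\psi(X_1 \sqcap \cdots \sqcap X_N)$; consequently $\sqcappsi$ is associative and the unparenthesized left-hand side is well defined.

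I do not expect any genuine obstacle: the substantive content is exactly the absorption identity $\psi(x \sqcap y) = \psi(\psi(x) \sqcap y)$ of Proposition~\ref{prop:psi(xy)=psi(psi(x)y)}, which has already done the real work, so the corollary reduces to a routine induction once that proposition is available.
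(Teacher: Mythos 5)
Your proof is correct and follows essentially the same route as the paper: the base case is Definition~\ref{def:projected-PS}, and the inductive step unfolds the outermost $\sqcap_\psi$ and collapses the nested $\psi$ via Proposition~\ref{prop:psi(xy)=psi(psi(x)y)}. Your additional remark on the well-definedness of the unparenthesized left-hand side (associativity of $\sqcap_\psi$) is a point the paper silently skips, and it is a welcome, easily completed addition rather than a divergence in method.
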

\begin{proof}
	It can be prooven by induction.	
	\begin{enumerate}
		\item
			$X_1 \sqcappsi X_2 = \psi(X_1 \sqcap X_2)$ by Definition~\ref{def:projected-PS}.
		\item
			If $X_1 \sqcappsi \cdots \sqcappsi X_K = \psi(X_1 \sqcap \cdots \sqcap X_K)$, then

			$\begin{aligned}
				X_1 \sqcappsi \cdots &\sqcappsi X_{K} \sqcappsi X_{K+1} = \psi(X_1 \sqcap \cdots \sqcap X_K) \sqcappsi X_{K+1} = \\
				&=\psi(\psi(X_1 \sqcap \cdots \sqcap X_K) \sqcap X_{K+1}) \underset{\text{Proposition~\ref{prop:psi(xy)=psi(psi(x)y)}}}{=} \psi(X_1 \sqcap \cdots \sqcap X_{K+1})
			\end{aligned}$
	\end{enumerate}
\end{proof}
\begin{corollary}
	Given a semilattice $(D,\sqcap)$ and a projection $\psi$, $(D_\psi, \sqcap_\psi)$ is a semilattice, i.e. $\sqcap_\psi$ is commutative, associative and idempotent.
\end{corollary}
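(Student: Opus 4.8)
The plan is to verify the three semilattice axioms directly from the definition $x \sqcap_\psi y := \psi(x \sqcap y)$, leaning on the two results just established. First I would record the standing observation that $D_\psi$ is precisely the set of fixed points of $\psi$: if $d = \psi(d^*)$ then idempotency gives $\psi(d) = \psi(\psi(d^*)) = \psi(d^*) = d$, and conversely $\psi(d) = d$ exhibits $d$ as a value of $\psi$, so $d \in D_\psi$. This immediately yields closure, since $x \sqcap_\psi y = \psi(x \sqcap y)$ lies in the image of $\psi$ and hence in $D_\psi$; thus $\sqcap_\psi$ is a genuine binary operation on $D_\psi$, which is what needs checking before the algebraic laws even make sense.

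Commutativity and idempotency are then almost immediate. For commutativity I would simply transport the commutativity of $\sqcap$ through $\psi$: $x \sqcap_\psi y = \psi(x \sqcap y) = \psi(y \sqcap x) = y \sqcap_\psi x$. For idempotency I would use that elements of $D_\psi$ are fixed points of $\psi$, so that $x \sqcap_\psi x = \psi(x \sqcap x) = \psi(x) = x$ for every $x \in D_\psi$.

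The one law that genuinely exercises the preceding machinery is associativity, and this is the step I would treat most carefully. Using Corollary~\ref{cor:psi-sqcap=sqcap-psi} with $N = 3$, I would collapse both bracketings to a single expression, obtaining $(x \sqcap_\psi y) \sqcap_\psi z = \psi(x \sqcap y \sqcap z) = x \sqcap_\psi (y \sqcap_\psi z)$. If instead one unfolds the definitions by hand, the crux is $(x \sqcap_\psi y) \sqcap_\psi z = \psi(\psi(x \sqcap y) \sqcap z) = \psi((x \sqcap y) \sqcap z)$, where the final equality is exactly Proposition~\ref{prop:psi(xy)=psi(psi(x)y)}; the symmetric side is handled identically after commuting, so both sides reduce to $\psi(x \sqcap y \sqcap z)$. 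There is no real obstacle here, and the only thing to watch is that the inner $\psi$ introduced by each application of $\sqcap_\psi$ can be absorbed without changing the value — that absorption is precisely the content of Proposition~\ref{prop:psi(xy)=psi(psi(x)y)} (and its iterated form, Corollary~\ref{cor:psi-sqcap=sqcap-psi}), so invoking those results is what makes the argument a one-line reduction rather than a brute manipulation.
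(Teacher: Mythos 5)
Your proof is correct and follows exactly the route the paper intends: the paper states this corollary without proof, presenting it as an immediate consequence of Proposition~\ref{prop:psi(xy)=psi(psi(x)y)} and Corollary~\ref{cor:psi-sqcap=sqcap-psi}, and your argument simply supplies the details (closure via the image of $\psi$, idempotency via the fixed-point characterization of $D_\psi$, and associativity by absorbing the inner $\psi$ through Proposition~\ref{prop:psi(xy)=psi(psi(x)y)}). Your remark that the symmetric bracketing needs commutativity before Proposition~\ref{prop:psi(xy)=psi(psi(x)y)} applies is a worthwhile precision that the paper glosses over.
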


The concepts of a pattern structure and a projected pattern structure are connected through Proposition~\ref{prop:proj-and-concepts}. This proposition can be found in \citet{Ganter2001}, but thanks to Corollary~\ref{cor:psi-sqcap=sqcap-psi}, it is valid in our case.
\begin{proposition}\label{prop:proj-and-concepts}
	Given a concept $(A,d)$ in $\psi((G,(D,\sqcap),\delta))$, the extent $A$ is an extent in $(G,(D,\sqcap),\delta)$. 
	Given a concept $(A,d_\psi)$ in $\psi((G,(D,\sqcap),\delta))$, the intent $d_\psi$ is of the form $d_\psi=\psi(d)$, where $(A,d)$ is a concept in $(G,(D,\sqcap),\delta)$.
\end{proposition}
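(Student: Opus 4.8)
The plan is to reduce everything to a single identity relating the two Galois connections, namely that for every $A\subseteq G$ the projected description operator satisfies $A^{\diamond_\psi}=\psi(A^\diamond)$, where $\diamond$ denotes the Galois operators of $(G,(D,\sqcap),\delta)$ and $\diamond_\psi$ those of the projected structure $(G,(D_\psi,\sqcap_\psi),\psi\circ\delta)$. First I would prove this identity: writing $A=\{g_1,\dots,g_n\}$, by definition $A^{\diamond_\psi}=\psi(\delta(g_1))\sqcap_\psi\cdots\sqcap_\psi\psi(\delta(g_n))$, and Corollary~\ref{cor:psi-sqcap=sqcap-psi} rewrites this as $\psi\big(\psi(\delta(g_1))\sqcap\cdots\sqcap\psi(\delta(g_n))\big)$; repeated application of Proposition~\ref{prop:psi(xy)=psi(psi(x)y)} then strips every inner $\psi$, leaving $\psi(\delta(g_1)\sqcap\cdots\sqcap\delta(g_n))=\psi(A^\diamond)$. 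Alongside this I would record that the subsumption order of $D_\psi$ is just the restriction of $\sqsubseteq$: for $c,d\in D_\psi$ one has $c\sqcap_\psi d=\psi(c\sqcap d)$ by Definition~\ref{def:projected-PS}, and using $\psi(c)=c$ together with contractivity of $\psi$ one checks $c\sqcap_\psi d=c$ if and only if $c\sqsubseteq d$.

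Next I would verify that the object operators agree on projected descriptions, i.e. $d^{\diamond_\psi}=d^\diamond$ for every $d\in D_\psi$. One inclusion is immediate from contractivity, since $d\sqsubseteq\psi(\delta(g))\sqsubseteq\delta(g)$; the reverse follows by applying the monotone $\psi$ to $d\sqsubseteq\delta(g)$ and using $\psi(d)=d$. With these facts the first claim is short. Let $(A,d)$ be a concept of the projected structure, so that $d=A^{\diamond_\psi}$ and $A=d^{\diamond_\psi}=d^\diamond=(\psi(A^\diamond))^\diamond$ by the identity above. Since $\psi(A^\diamond)\sqsubseteq A^\diamond$ by contractivity and $(\cdot)^\diamond$ is antitone, we get $(\psi(A^\diamond))^\diamond\supseteq(A^\diamond)^\diamond=A^{\diamond\diamond}$, whence $A\supseteq A^{\diamond\diamond}$; combined with the always-valid extensivity $A\subseteq A^{\diamond\diamond}$ this yields $A=A^{\diamond\diamond}$, so $A$ is an extent of the original pattern structure.

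For the second claim I would feed the first one back in. Having shown that $A$ is an original extent, the pair $(A,A^\diamond)$ is a concept of $(G,(D,\sqcap),\delta)$ because $A^{\diamond\diamond}=A$. Setting $d:=A^\diamond$, the intent of the projected concept is $d_\psi=A^{\diamond_\psi}=\psi(A^\diamond)=\psi(d)$ by the central identity, which is exactly the asserted form with $(A,d)$ the corresponding original concept. The main obstacle is the central identity $A^{\diamond_\psi}=\psi(A^\diamond)$: it is the only place where the non-standard definition of $\sqcap_\psi$ from Definition~\ref{def:projected-PS} enters, and it must be handled by carefully chaining Corollary~\ref{cor:psi-sqcap=sqcap-psi} with Proposition~\ref{prop:psi(xy)=psi(psi(x)y)} to absorb the iterated projections. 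Once it is in place, the remaining steps are routine Galois-connection bookkeeping: antitonicity and extensivity of $(\cdot)^\diamond$, plus the order agreement on $D_\psi$.
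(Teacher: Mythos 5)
Your proof is correct. Note that the paper itself does not prove this proposition: it only remarks that the statement ``can be found in \citet{Ganter2001}'' and that Corollary~\ref{cor:psi-sqcap=sqcap-psi} is what makes it carry over to the modified definition of a projected pattern structure (Definition~\ref{def:projected-PS}, where the semilattice is replaced by $(D_\psi,\sqcap_\psi)$ rather than keeping $(D,\sqcap)$). Your argument makes that remark precise in exactly the intended way: the central identity $A^{\diamond_\psi}=\psi(A^\diamond)$ is where Corollary~\ref{cor:psi-sqcap=sqcap-psi} and Proposition~\ref{prop:psi(xy)=psi(psi(x)y)} are consumed, and your two auxiliary observations --- that $\sqsubseteq_{\psi}$ restricted to $D_\psi$ agrees with $\sqsubseteq$ (via idempotence and contractivity), and that $d^{\diamond_\psi}=d^\diamond$ for $d\in D_\psi$ (via contractivity one way and monotonicity plus $\psi(d)=d$ the other) --- are precisely the bookkeeping needed to transfer closedness between the two structures. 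The deduction $A=(\psi(A^\diamond))^\diamond\supseteq A^{\diamond\diamond}\supseteq A$ for the first claim and the identification $d_\psi=\psi(A^\diamond)$ for the second are both sound. The only cosmetic caveat is that writing $A=\{g_1,\dots,g_n\}$ tacitly assumes $A$ nonempty; for $A=\emptyset$ the identity still holds because $\psi(\top)$ is the top of $D_\psi$ by monotonicity, so nothing breaks, but a one-line remark would close that edge case.
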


Moreover, while preserving the extents of some concepts, projections cannot decrease the stability of the projected concepts, i.e. if the projection preserves a stable concept, then its stability (Definition~\ref{def:stability}) can only increase.

\begin{proposition}\label{prop:proj-and-stability}
	Given a pattern structure $(G,(D,\sqcap),\delta)$, its concept $c$ and a projected pattern structure $(G,(D_\psi,\sqcappsi),\psi \circ \delta)$, and the projected concept $\tilde{c}$, if the concept extents are equal ($\extC{c}=\extC{\tilde{c}}$) then $\stab{c} \leq \stab{\tilde{c}}$.
\end{proposition}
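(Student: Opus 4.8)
The plan is to exploit the fact that stability is a ratio whose denominator depends only on the extent. Since we are given $\extC{c}=\extC{\tilde{c}}$, the two powersets $\wp(\extC{c})$ and $\wp(\extC{\tilde{c}})$ coincide, so the denominators in~(\ref{eq:stability}) are identical and it suffices to compare the two numerators, i.e.\ the number of subsets $s\subseteq\extC{c}$ whose description equals the respective intent. Writing $(\cdot)^\diamond$ for the derivation operator of $(G,(D,\sqcap),\delta)$ and $(\cdot)^{\diamond_\psi}$ for that of the projected structure $(G,(D_\psi,\sqcappsi),\psi\circ\delta)$, I want to show that the set of subsets counted for $c$ is contained in the set of subsets counted for $\tilde{c}$.

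The first key step is to relate the two derivation operators. For any $s\subseteq\extC{c}$ we have $s^{\diamond_\psi}=\underset{g\in s}{\sqcappsi}\,\psi(\delta(g))$, and by Corollary~\ref{cor:psi-sqcap=sqcap-psi} this equals $\psi\big(\underset{g\in s}{\sqcap}\,\psi(\delta(g))\big)$. Applying Proposition~\ref{prop:psi(xy)=psi(psi(x)y)} repeatedly to strip the inner occurrences of $\psi$ then gives $s^{\diamond_\psi}=\psi\big(\underset{g\in s}{\sqcap}\,\delta(g)\big)=\psi(s^\diamond)$. In other words, projecting commutes with taking descriptions of object sets. The second ingredient is that the intent of $\tilde{c}$ is the projection of the intent of $c$: since $\extC{\tilde{c}}=\extC{c}$ and a concept is determined by its extent, Proposition~\ref{prop:proj-and-concepts} yields $\intC{\tilde{c}}=\psi(\intC{c})$.

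With these two facts the inclusion is immediate. Take any $s\subseteq\extC{c}$ counted in the numerator of $\stab{c}$, i.e.\ with $s^\diamond=\intC{c}$. Then $s^{\diamond_\psi}=\psi(s^\diamond)=\psi(\intC{c})=\intC{\tilde{c}}$, so $s$ is also counted in the numerator of $\stab{\tilde{c}}$. Hence $\{s\in\wp(\extC{c})\mid s^\diamond=\intC{c}\}\subseteq\{s\in\wp(\extC{\tilde{c}})\mid s^{\diamond_\psi}=\intC{\tilde{c}}\}$, the numerator for $\tilde{c}$ is at least as large as that for $c$, and dividing by the common denominator gives $\stab{c}\le\stab{\tilde{c}}$.

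I expect the only delicate point to be the identity $s^{\diamond_\psi}=\psi(s^\diamond)$: one must be careful that the meet in the projected semilattice is $\sqcappsi$ and not $\sqcap$, and that the object descriptions there are $\psi\circ\delta$ rather than $\delta$, so both Corollary~\ref{cor:psi-sqcap=sqcap-psi} (to pull $\psi$ outside the iterated meet) and Proposition~\ref{prop:psi(xy)=psi(psi(x)y)} (to absorb the inner projections) are genuinely needed. Everything else is bookkeeping once the numerators are compared over the common powerset.
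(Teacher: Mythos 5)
Your proof is correct, and it reaches the same reduction as the paper --- both arguments observe that the denominators in~(\ref{eq:stability}) coincide and that it suffices to show every $s\subseteq\extC{c}$ with $s^\diamond=\intC{c}$ also satisfies $s^{\diamond_\psi}=\intC{\tilde{c}}$ --- but you establish that implication by a genuinely different route. You prove the commutation identity $s^{\diamond_\psi}=\psi(s^\diamond)$ directly from Corollary~\ref{cor:psi-sqcap=sqcap-psi} and Proposition~\ref{prop:psi(xy)=psi(psi(x)y)} and then simply push $\psi$ through the equality $s^\diamond=\intC{c}$; the paper instead argues by contradiction, assuming some $A$ is counted for $c$ but not for $\tilde{c}$, locating a proper descendant $\tilde{d}$ of $\tilde{c}$ in the projected lattice with intent $A^{\diamond_\psi}$, pulling it back via Proposition~\ref{prop:proj-and-concepts} to an original concept $d$ with the same extent, and deriving $A^\diamond\sqsupseteq\intC{d}\sqsupset\intC{c}$. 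Your version is the cleaner of the two: it is constructive, it isolates the reusable fact that projection commutes with the derivation operator (which also gives $\intC{\tilde{c}}=\psi(\intC{c})$ for free by taking $s=\extC{c}$, so the appeal to Proposition~\ref{prop:proj-and-concepts} is not even strictly needed), and it makes explicit exactly where the extended definition of $\sqcappsi$ enters. The paper's argument, in exchange, stays at the level of the concept lattices and avoids any computation with iterated meets. One small point worth a sentence in a polished write-up: the identity $s^{\diamond_\psi}=\psi(s^\diamond)$ should also be checked for $s=\emptyset$ (which belongs to $\wp(\extC{c})$), where both sides are the respective top elements; this holds because $\psi$ of the top of $D$ is the top of $D_\psi$, but it does not literally follow from the iterated-meet manipulation.
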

\begin{proof}
	Concepts $c$ and $\tilde{c}$ have the same extent. Thus, according to Definition~\ref{def:stability}, in order to prove the proposition, it is enough to prove that for any subset $A \subseteq \extC{c}$, if $A^\diamond = \intC{c}$ in the original pattern structure, then $A^\diamond = \intC{\tilde{c}}$ in the projected one.
	
	Suppose that $\exists A \subset \extC{c}$ such that $A^\diamond = \intC{c}$ in the original pattern structure and $A^\diamond \neq \intC{\tilde{c}}$ in the projected one. Then there is a descendant concept $\tilde{d}$ of $\tilde{c}$ in the projected pattern structure such that $A^\diamond = \intC{\tilde{d}}$ in the projected lattice. Then there is an original concept $d$ for the projected concept $\tilde{d}$ with the same extent $\extC{d}$. Then $A^\diamond \sqsupseteq \intC{d}\sqsupset \intC{c}$ and, so, $A^\diamond$ cannot be equal to $\intC{c}$ in the original lattice. Contradiction.
\end{proof}

Now we are going to present two projections of sequential pattern structures. The first projection comes from the following observation. In many cases it may be more interesting to analyze quite long subsequences rather than short ones. This kind of projections is called \textit{Minimal Length Projection} (MLP) and it depends on the minimal length parameter $\ell$ for the sequences in a pattern. The corresponding function $\psi$ maps a pattern without short sequences to itself, and a sequence with short sequences to the pattern containing only long sequences w.r.t. a given length threshold. Later, propositions~\ref{prop:sequential-subsumption}~and~\ref{prop:proj-MLP} state that MLP is coherent with Definition~\ref{def:projection}.

\begin{definition}
	 The function $\psi_{MLP}:D \rightarrow D$ of minimal length $\ell$ is defined as \[
		\psi_{MLP}(d) = \set{s \in d \mid length(s) \geq \ell}
	\]
\end{definition}

\begin{example}
	If we prefer common subsequences of length $\ell \geq 3$, then between $p^2$ and $p^3$ in Table~\ref{tbl:sps-ex-patients} there is only one maximal common subsequence, $ss^6$ in Table~\ref{tbl:sps-ex-common-ss}, while $ss^7$ and $ss^8$ are too short to be considered.
	Figure~\ref{fig:sps-ex-proj-lattices:mlp} shows the lattice of the projected pattern structure (Table~\ref{tbl:sps-ex-patients}) with patterns of length greater or equal to $3$.
\end{example}

\begin{proposition}\label{prop:proj-MLP}
	The function $\psi_{MLP}$ is a monotone, contractive and idempotent function on the semilattice $(D,\sqcap)$.
\end{proposition}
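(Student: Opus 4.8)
The plan is to verify the three interior-operator properties of Definition~\ref{def:projection} directly from the formula $\psi_{MLP}(d) = \set{s \in d \mid length(s) \geq \ell}$, using the subsumption characterization of Proposition~\ref{prop:sequential-subsumption} for the two order-theoretic claims and a plain set computation for idempotency. The single enabling fact I would isolate first is that the subsequence relation never increases length: if $t \leq s$ then, by the index inequality $k \leq n$ in Definition~\ref{def:subsequence}, one has $length(t) \leq length(s)$. Equivalently, any sequence dominating $s$ is at least as long as $s$; this is exactly what couples the length filter to the order $\sqsubseteq$.

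For monotonicity I would assume $x \sqsubseteq y$ and take an arbitrary $s \in \psi_{MLP}(x)$, so that $s \in x$ and $length(s) \geq \ell$. By Proposition~\ref{prop:sequential-subsumption} there is $s^y \in y$ with $s \leq s^y$, and the enabling fact gives $length(s^y) \geq length(s) \geq \ell$, hence $s^y \in \psi_{MLP}(y)$. Since every element of $\psi_{MLP}(x)$ is thus dominated by an element of $\psi_{MLP}(y)$, Proposition~\ref{prop:sequential-subsumption} yields $\psi_{MLP}(x) \sqsubseteq \psi_{MLP}(y)$.

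Contractivity I expect to be immediate: $\psi_{MLP}(x) \subseteq x$ as sets of sequences, so each $s \in \psi_{MLP}(x)$ already lies in $x$ and is dominated by itself, giving $\psi_{MLP}(x) \sqsubseteq x$ again via Proposition~\ref{prop:sequential-subsumption}. Idempotency should follow by unwinding the definition twice, $\psi_{MLP}(\psi_{MLP}(x)) = \set{s \in \psi_{MLP}(x) \mid length(s) \geq \ell} = \set{s \in x \mid length(s) \geq \ell} = \psi_{MLP}(x)$, because re-applying the same threshold removes nothing.

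I do not anticipate a genuine obstacle; the only point that needs care is making the length/order coupling explicit, namely that $s \leq s^y$ forces $length(s^y) \geq length(s)$, which is what lets the long witness of domination survive the filter in the monotonicity step. A secondary subtlety worth a brief remark is that $\psi_{MLP}(d)$ discards short subsequences and so is not downward closed in the full set-representation of $D$; but since patterns are identified with their sets of maximal sequences and all three properties are phrased purely in terms of $\sqsubseteq$, this representational point does not affect the argument.
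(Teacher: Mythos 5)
Your proposal is correct and follows essentially the same route as the paper: contractivity and idempotency are read off the definition, and monotonicity is obtained via Proposition~\ref{prop:sequential-subsumption} together with the observation that $t \leq s$ forces $length(t) \leq length(s)$, so the dominating witness survives the length filter. Your closing remark about $\psi_{MLP}(d)$ not being downward closed in the full set-representation is a reasonable extra precaution that the paper leaves implicit by working with maximal sequences.
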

\begin{proof}
	The contractivity and idempotency are quite clear from the definition. It remains to prove the monotonicity.
	
	If $X \sqsubseteq Y$, where $X$ and $Y$ are sets of sequences, then for every sequence $x \in X$ there is a sequence $y \in Y$ such that $x \leq y$ (Proposition~\ref{prop:sequential-subsumption}). We should show that $\psi(X) \sqsubseteq \psi(Y)$, or in other words for every sequence $x \in \psi(X)$ there is a sequence $y \in \psi(Y)$, such that $x\leq y$. Given $x \in \psi(X)$, since $\psi(X)$ is a subset of $X$ and $X \sqsubseteq Y$, there is a sequence $y\in Y$ such that $x \leq y$, with $|y| \geq |x| \geq \ell$ ($\ell$ is a parameter of MLP), and thus, $y \in \psi(Y)$.
\end{proof}

Another important type of projections is related to a variation of the lattice alphabet $(E,\sqcape)$. One possible variation of the alphabet is to ignore certain fields in the elements. For example, if a hospitalization is described by a hospital name and a set of procedures, then either hospital or procedures can be ignored in similarity computation. For that, in any element the set of procedures should be substituted by $\emptyset$, or the hospital by $*$ (``arbitrary hospital'') which is the most general element of the taxonomy of hospitals.

Another variation of the alphabet is to require that some field(s) should not be empty. For example, we want to find patterns with non-empty set of procedures or the element $*$ of the hospital taxonomy is not allowed in elements of a sequence. Such variations are easy to realize within our approach. For this, when computing the similarity operation between elements of the alphabet, one should check if the result contains empty fields and, if yes, should substitute the result by $\bot$. This variation is useful, as it is shown in the experimental section, but is rather difficult to define within more classical frequent sequence mining approaches, which will be discussed later.

\begin{example}\label{ex:alphabet-proj-no-procedures}
	An expert is interested in finding sequential patterns describing how a patient changes hospitals, but with little interest in procedures. Thus, any element of the alphabet lattice, containing a hospital and a non-empty set of procedures can be projected to an element with the same hospital, but with an empty set of procedures. 
\end{example}

\begin{example}\label{ex:alphabet-proj-no-top-hospitals}
	An expert is interested in finding sequential patterns containing some information about the hospital in every hospitalization, and the corresponding procedures, i.e. hospital field in the patterns cannot be equal to $*$, e.g., $ss^5$ is an invalid pattern, while $ss^6$ is a valid pattern in Table~\ref{tbl:sps-ex-common-ss}. Thus, any element of the alphabet semilattice with $*$ in the hospital field can be projected to the $\bote$.
	Figure~\ref{fig:sps-ex-proj-lattices:NoRootHospital} shows the lattice corresponding to the projected pattern structure (Table~\ref{tbl:sps-ex-patients}) defined by a projection of the alphabet semilattice.
\end{example}

Below we formally define how the alphabet projection of a sequential pattern structure should be processed. Intuitively, every sequence in a pattern should be substituted with another sequence, by applying the alphabet projection to all its elements. However, the result can be an incorrect sequence, because $\bote$ cannot belong to a valid sequence. Thus, sequences in a pattern should be ``developed'' w.r.t. $\bote$, as it is explained below.

%

\begin{definition}\label{def:projected-sequence}
	Given an alphabet $(E,\sqcape)$, a projection of the alphabet $\psi$ and a sequence $s=\seq{s_1,\cdots,s_n}$ based on $E$, the projection $\psi(s)$ is the sequence $\tilde{s}=\seq{\tilde{s}_1,\cdots,\tilde{s}_n}$, such that $\tilde{s}_i=\psi(s_i)$.
\end{definition}

Here, it should be noticed that $\tilde{s}$ is not necessarily a valid sequence (see Definition~\ref{def:valid-sequence}), since it can include $\bote$ as an element. However, in sequential pattern structures, elements should include only valid sequences (see Section~\ref{sect:sps-seq-semilattice}).

\begin{definition}\label{def:pattern-alph-proj}
	Given an alphabet $(E,\sqcape)$, a projection of the alphabet $\psi_E$, an alphabet projection for the sequential pattern structure $\psi(d)$ is the set of valid sequences smaller than the projected sequences from $d$:\[
		\psi(d) = \{s \in \mathfrak{S} | (\exists t \in d) s \leq \psi_E(t)\},
	\] where $\mathfrak{S}$ is the set of all valid sequences based on $(E,\sqcape)$.
\end{definition}

\begin{example}
	$\{\ssVIRef\}=\set{\ssVI}$ is an al\-pha\-bet-pro\-jec\-ted pattern for the pattern $\{\ssXRef\}=\set{\ssX}$, where the alphabet lattice projection is given in Example~\ref{ex:alphabet-proj-no-top-hospitals}.
	
	In the case of contiguous subsequences, $\set{\seq{\el{CH}{c\+d}}}$ is an al\-pha\-bet-pro\-jec\-ted pattern for the pattern $\{\ssIIRef\}=\set{\ssII}$, where the alphabet lattice projection is given by projecting every element with medical procedure $b$ to the element with the same hospital and with the same set of procedures excluding $b$. The projection of sequence $\ssIIRef$ is $\seq{\el{CH}{c\+d}\+\el{*}{}\+\el{*}{d}}$, but $\el{*}{} = \bote$, and, thus, in order to project the pattern $\{\ssIIRef\}$ the projected sequence is substituted by its maximal subsequences, i.e. \[
		\psi(\{\ssII\})=\set{\seq{\el{CH}{c\+d}}}.
	\]
\end{example}

\begin{proposition}\label{prop:proj-alphabet-proj}
	Considering an alphabet $(E,\sqcape)$, a projection of the alphabet $\psi$, a sequential pattern structure $(G,(D,\sqcap),\delta)$, the alphabet projection (see Definition~\ref{def:pattern-alph-proj}) is monotone, contractive and idempotent.
\end{proposition}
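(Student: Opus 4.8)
The plan is to verify the three interior-operator properties of Definition~\ref{def:projection} for the pattern-level map $\psi$ of Definition~\ref{def:pattern-alph-proj}, reducing each to the subsequence criterion of Proposition~\ref{prop:sequential-subsumption}. Everything rests on first lifting the three properties of the alphabet projection $\psi_E$ on $(E,\sqcape)$ to the element-wise sequence map of Definition~\ref{def:projected-sequence}. Writing $\psi_E(t)=\seq{\psi_E(t_1)\+\ldots\+\psi_E(t_n)}$ for $t=\seq{t_1\+\ldots\+t_n}$ (a sequence that may contain $\bote$), I would record: (a)~$\psi_E(t)\leq t$, using the positions $j_i=i$ together with $\psi_E(t_i)\sqsubseteqe t_i$; (b)~$t\leq u\Rightarrow\psi_E(t)\leq\psi_E(u)$, keeping the witnessing positions of $t\leq u$ and applying monotonicity of $\psi_E$ on $E$ to each matched pair; and (c)~$\psi_E(\psi_E(t))=\psi_E(t)$, since $\psi_E(\psi_E(t_i))=\psi_E(t_i)$ element-wise. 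I also note that $\psi(d)$, being the set of all valid sequences below some $\psi_E(t)$, is downward closed and hence lies in $D$.

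Contractivity and monotonicity are then short. For any $s\in\psi(d)$ we have $s\leq\psi_E(t)$ for some $t\in d$, and by (a) and transitivity of $\leq$ we get $s\leq t$ with $t\in d$; Proposition~\ref{prop:sequential-subsumption} gives $\psi(d)\sqsubseteq d$. For monotonicity, assume $d_1\sqsubseteq d_2$ and take $s\in\psi(d_1)$ with $s\leq\psi_E(t)$, $t\in d_1$; Proposition~\ref{prop:sequential-subsumption} yields $u\in d_2$ with $t\leq u$, and (b) gives $\psi_E(t)\leq\psi_E(u)$, so $s\leq\psi_E(u)$ with $u\in d_2$, i.e.\ $s\in\psi(d_2)$. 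A second appeal to Proposition~\ref{prop:sequential-subsumption} gives $\psi(d_1)\sqsubseteq\psi(d_2)$.

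Idempotency is where the care is needed, and I expect it to be the main obstacle. One cannot simply test $s\leq\psi_E(s)$ for $s\in\psi(d)$, because the image of $\psi_E$ on $E$ need not be downward closed, so the elements of a valid subsequence $s$ of $\psi_E(t)$ need not be $\psi_E$-fixed, and by contractivity $\psi_E(s_i)\sqsubseteqe s_i$ may be strict. The remedy is exactly the ``development with respect to $\bote$'' alluded to before Definition~\ref{def:projected-sequence}: for $t\in d$ let $\hat t$ be the valid sequence obtained from $\psi_E(t)$ by deleting every $\bote$ entry. Each surviving entry equals some $\psi_E(t_i)$, hence is $\psi_E$-fixed by (c), so $\psi_E(\hat t)=\hat t$; and since $s\in\mathfrak{S}$ is valid, the positions matched in $s\leq\psi_E(t)$ cannot land on a $\bote$ entry (otherwise some $s_i\sqsubseteqe\bote$ would force $s_i=\bote$, contradicting Definition~\ref{def:valid-sequence}), so those positions survive in $\hat t$ and $s\leq\hat t$. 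Thus $\hat t\in\psi(d)$ and $s\leq\hat t=\psi_E(\hat t)$, giving $s\in\psi(\psi(d))$ and hence $\psi(d)\subseteq\psi(\psi(d))$.

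For the reverse inclusion I would argue directly: if $r\in\psi(\psi(d))$ then $r\leq\psi_E(q)$ for some $q\in\psi(d)$, and $q\leq\psi_E(t)$ for some $t\in d$; by (b) and (c), $\psi_E(q)\leq\psi_E(\psi_E(t))=\psi_E(t)$, whence $r\leq\psi_E(t)$ and $r\in\psi(d)$. The two inclusions give $\psi(\psi(d))=\psi(d)$, finishing the proof that $\psi$ is an interior operator. The only genuinely delicate point throughout is the interaction between sequence validity and the $\bote$ entries created by the alphabet projection, which is handled once and for all by passing to $\hat t$.
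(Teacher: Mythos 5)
Your proof is correct, and for contractivity and monotonicity it follows essentially the same route as the paper: contractivity via $s\leq\psi_E(t)\leq t$ and monotonicity by pushing the witnessing positions of $s^x\leq s^y$ through the element-wise monotonicity of $\psi_E$ and invoking Proposition~\ref{prop:sequential-subsumption}. Where you genuinely diverge is idempotency: the paper disposes of it in one sentence (``the projection of the alphabet is idempotent and only the projection of the alphabet can change the elements appearing in sequences''), which glosses over precisely the point you isolate, namely that an arbitrary valid $s\in\psi(d)$ need not have $\psi_E$-fixed entries, so one cannot conclude $s\in\psi(\psi(d))$ by testing $s\leq\psi_E(s)$. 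Your two-inclusion argument --- passing to the ``developed'' sequence $\hat t$ obtained by deleting the $\bote$ entries of $\psi_E(t)$, checking that $\hat t$ is valid, $\psi_E$-fixed, and still dominates $s$ because a valid sequence cannot match a $\bote$ position, and then using $\psi_E(\psi_E(t))=\psi_E(t)$ for the reverse inclusion --- actually supplies the missing justification, so your write-up is more complete than the paper's on this point. One caveat: your $\hat t$ construction is sound for the general subsequence relation of Definition~\ref{def:subsequence} (which is the setting of the proposition), but under the contiguous restriction used in the experiments, deleting an interior $\bote$ entry would break contiguity, and one would instead have to split $\psi_E(t)$ into its maximal $\bote$-free runs; the rest of your argument goes through unchanged run by run.
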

\begin{proof}
	This projection is idempotent, since the projection of the alphabet is idempotent and only the projection of the alphabet can change the elements appearing in sequences.
	
	It is contractive because for any pattern $d \in D$ and any sequences $s \in d$, a projection of the sequence $\tilde{s}=\psi(s)$ is a subsequence of $s$. In Definition~\ref{def:pattern-alph-proj} the projected sequences should be substituted by their subsequences in order to avoid $\bote$, building the sets $\{\tilde{s}^i\}$. Thus, $s$ is a supersequence for any $\tilde{s}^i$, and, so, the projected pattern $\tilde{d}=\psi(d)$ is subsumed by the pattern $d$.
	
	Finally, we should show monotonicity. Given two patterns $x,y \in D$, such that $x \sqsubseteq y$, i.e. $\forall s^x \in x, \exists s^y \in y: s^x \leq s^y$, consider the projected sequence of $s^x$, $\psi(s^x)$. As $s^x \leq s^y$ for some $s^y$ then for some $j_0 < \cdots < j_{|s^x|}$ (see Definition~\ref{def:subsequence}) $s^x_i \sqsubseteqe s^y_{j_i}$ ($i \in {1,2,...,|s^x|}$), then $\psi(s^x_i) \sqsubseteqe \psi(s^y_{j_i})$ (by the monotonicity of the alphabet projection), i.e. the projected sequence preserves the subsequence relation. Thus, the set of allowed subsequences of $s^x$ is a subset of the set of allowed subsequences of $s^y$. Hence, the alphabet projection of the pattern preserves pattern subsumption relation, $\psi(x) \leq \psi(y)$ (Proposition~\ref{prop:sequential-subsumption}), i.e. the alphabet projection is monotone.
\end{proof}

\begin{figure}
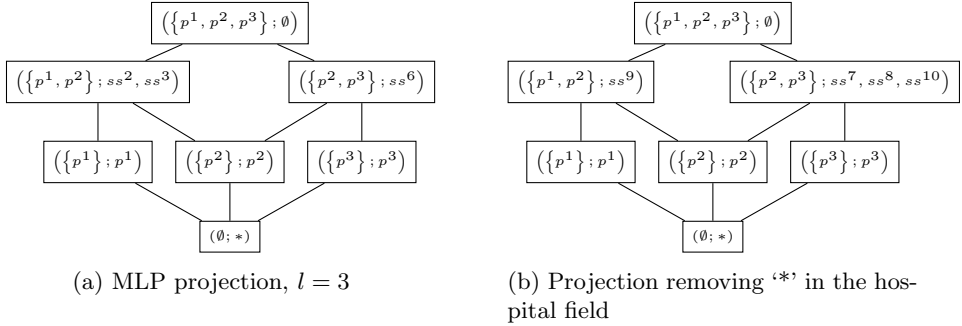

	\centering
	\begin{subfigure}[t]{0.45\columnwidth}
		\centering
		\putSPSLatticeMLPIII
		\caption{MLP projection, $l=3$}
		\label{fig:sps-ex-proj-lattices:mlp}
	\end{subfigure}
	\hfill
	\begin{subfigure}[t]{0.45\columnwidth}
		\centering
		\putSPSLatticeNoTypeRoot
		\caption{Projection removing `*' in the hospital field}
		\label{fig:sps-ex-proj-lattices:NoRootHospital}
	\end{subfigure}
	\caption{The projected concept lattices for the pattern structure given by Table~\ref{tbl:sps-ex-patients}. Concept intents refer to the sequences in Tables~\ref{tbl:sps-ex-patients}~and~\ref{tbl:sps-ex-common-ss}.}
	\label{fig:sps-ex-proj-lattices}
\end{figure}

\section{Sequential pattern structure evaluation}\label{sect:evaluation}

	\subsection{Implementation}\label{sect:evaluation-implementation}

Nearly any state-of-the-art FCA algorithm can be adapted to process pattern structures. We adapted the \texttt{AddIntent} algorithm~\citep{Merwe2004}, as the lattice structure is important for us to calculate stability (see an algorithm for calculating stability in~\citep{Roth2008}). To adapt the algorithm to our needs, every set intersection operation on attributes is substituted with the semilattice operation $\sqcap$ on corresponding patterns, while every subset checking operation is substituted with the semilattice order checking $\sqsubseteq$, in particular all $(\cdot)'$ are substituted with $(\cdot)^\diamond$.

The next question is how the semilattice operation $\sqcap$ and subsumption relation $\sqsubseteq$ can be implemented for contiguous sequences.
Given two sets of sequences $S=\{s^1,...s^n\}$ and $T=\{t^1,...,t^m\}$, the similarity of these sets $S \sqcap T$, is calculated according to Section~\ref{sect:sps-seq-semilattice}, i.e. maximal sequences among all common subsequences for any pair of sequences $s^i$ and $t^j$.

To find all common subsequences of two sequences, the following observations can be useful. If $ss=\seq{ss_1 \+ ... \+ ss_l}$ is a subsequence of $s=\seq{s_1\+...\+s_n}$ with $j^s_i=k^s + i$, i.e. $ss_i \sqsubseteq_E s_{k^s+i}$ (Definition~\ref{def:subsequence}: $k^s$ is the index difference from which $ss$ is a contiguous subsequence of $s$) and a subsequence of $t=\seq{t_1\+...\+t_m}$ with $j^t_i=k^t+i$, i.e. $ss_i \sqsubseteq_E t_{k^t+i}$, then for any index $i\in \set{1\+2\+...\+l}$, $ss_i \sqsubseteq_E (s_{j^s_i} \sqcap t_{j^t_i})$. Thus, to find all maximal common subsequences of $s$ and $t$, we first align $s$ and $t$ in all possible ways. For each alignment of $s$ and $t$ we compute the resulting intersection. Finally, we keep only the maximal intersected subsequences.

For example, let us consider two possible alignments of $s^1$ and $s^2$:\\
\begin{minipage}{0.55\columnwidth}
	\begin{tabular}{rr@{}c@{}c@{}c@{}l}
		$s^1=$
		& $\seqbegin\set{a}\seqdlmt\set{c\+d}\seqdlmt\f$ 
		& $\set{b\+a}$ & $\seqdlmt$ & $\set{d}$ 
		& $\s\seqend$\\
		
		$s^2=$
		& $\seqbegin\f$
		& $\set{c\+d}$ & $\seqdlmt$ & $\set{b\+d}$
		& $\s\seqdlmt\set{a\+d}\seqend$\\
		
		$ss^l=$
		& $\seqbegin\f$
		& $\emptyset$ & $\seqdlmt$ & $\set{d}$
		& $\s\seqend$
	\end{tabular}
\end{minipage}
\begin{minipage}{0.42\columnwidth}
	\begin{tabular}{rr@{}c@{$\seqdlmt$}c@{$\seqdlmt$}c@{}l}
		$s^1=$
		& $\seqbegin\set{a}\seqdlmt\f$ 
		& $\set{c\+d}$ & $\set{b\+a}$ & $\set{d}$ 
		& $\s\seqend$\\
		
		$s^2=$
		& $\seqbegin\f$
		& $\set{c\+d}$ & $\set{b\+d}$ & $\set{a\+d}$
		& $\s\seqend$\\
		
		$ss^r=$
		& $\seqbegin\f$
		& $\set{c\+d}$ & $\set{b}$ & $\set{d}$
		& $\s\seqend$
	\end{tabular}
\end{minipage}
\\
The left intersection $ss^l$ is not retained, as it is not maximal ($ss^l < ss^r$), while the right intersection $ss^r$ is kept.

The complexity of the alignment for two sequences $s$ and $t$ is $O(|s|\cdot |t| \cdot \gamma)$, where $\gamma$ is the complexity of computing a common ancestor in the alphabet lattice $(E,\sqcap)$.

	\subsection{Experiments and discussion}\label{sect:evaluation-experiments}

The experiments are carried out on a MacBook Pro with a 2.5GHz Intel Core i5, 8GB of RAM Memory running OS X 10.6.8. The algorithms are not parallelized and are coded in C++.



Our use-case dataset comes from a French healthcare system, called PMSI\footnote{Programme de M\'edicalisation des Syt\`emes d'Information.} \citep{Fetter1980}. Each element of a sequence has a ``complex'' nature. The dataset contains $500$ patients suffering from \emph{lung cancer}, who live in the Lorraine region (Eastern France). Every patient is described as a sequence of hospitalizations without any time-stamp. A hospitalization is a tuple with three elements: (i) healthcare institution (e.g. university hospital of Nancy ($CHU_{Nancy}$)), (ii) reason for the hospitalization (e.g. a cancer disease), and (iii) set of medical procedures that the patient undergoes. An example of a medical trajectory is given below:

{\small\[\begin{aligned}
	\seq{\el{\text{CHU}_{Nancy},\text{Cancer}}{mp_1,mp_2}\f&\s\+\f
	\s\el{\text{CH}_{Paris},\text{Chemo}}{}\f&\s\+\el{\text{CH}_{Paris},\text{Chemo}}{}}.
\end{aligned}\]}

This sequence represents a patient trajectory with three hospitalizations. It expresses that the patient was first admitted to the university hospital of Nancy ($CHU_{Nancy}$) for a cancer problem as a reason, and underwent procedures $mp_1$ and $mp_2$. Then he had two consequent hospitalizations in the general hospital of Paris ($CH_{Paris}$) for chemotherapy with no additional procedure. Substituting the same consequent hospitalizations by the number of repetitions, we have a shorter and more understandable trajectory. For example, the above pattern is transformed into two hospitalizations where the first hospitalization repeats once and the second twice:

{\small\[
	\seq{\el{\text{CHU}_{Nancy},\text{Cancer}}{mp_1,mp_2} \times [1]\+\el{\text{CH}_{Paris},\text{Chemo}}{} \times [2]}.
\]}

Diagnoses are coded according to the 10$^{th}$ International Classification of Diseases (ICD10). Based on this coding, diagnoses could be described at 5 levels of granularity: root, chapter, block, 3-character, 4-character, terminal nodes. This taxonomy has $1544$ nodes. The healthcare institution is associated with a geographical taxonomy of 4 levels, where the first level refers to the root (France) and the second, the third and the fourth levels correspond to administrative region, administrative department and hospital respectively. Figure~\ref{fig:FinessTax} presents University Hospital of Nancy (code: 540002078) as a hospital in Meurthe et Moselle, which is a department in Lorraine, region of France. This taxonomy has $304$ nodes. The \textit{medical procedures} are coded according to the French nomenclature ``Classification Commune des Actes M\'edicaux (CCAM)''. The distribution of sequence lengths is shown in Figure~\ref{fig:length-distrib}. 

\begin{figure}
	\centering
	\includegraphics[width=\columnwidth]{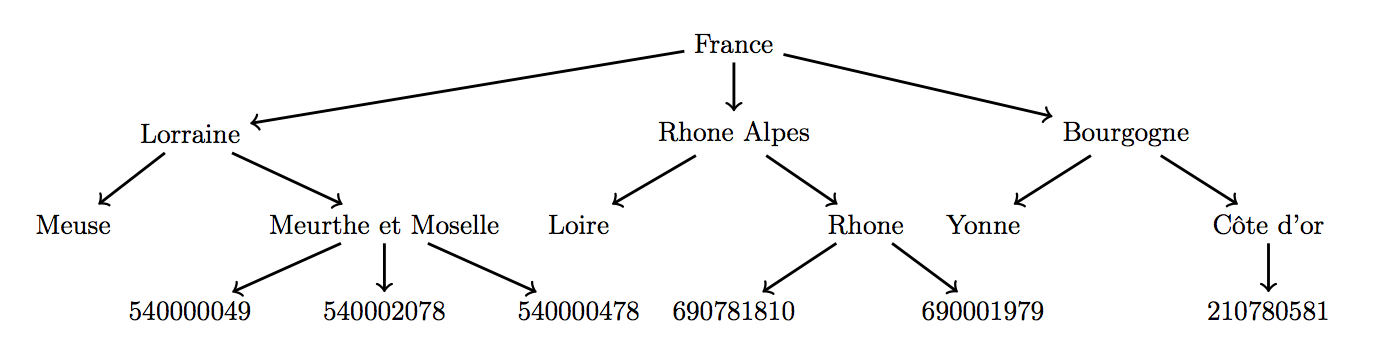}
	\caption{A geographical taxonomy of the healthcare institution}
	\label{fig:FinessTax}
\end{figure}

\begin{figure}
	\centering
	\includegraphics[width=\columnwidth]{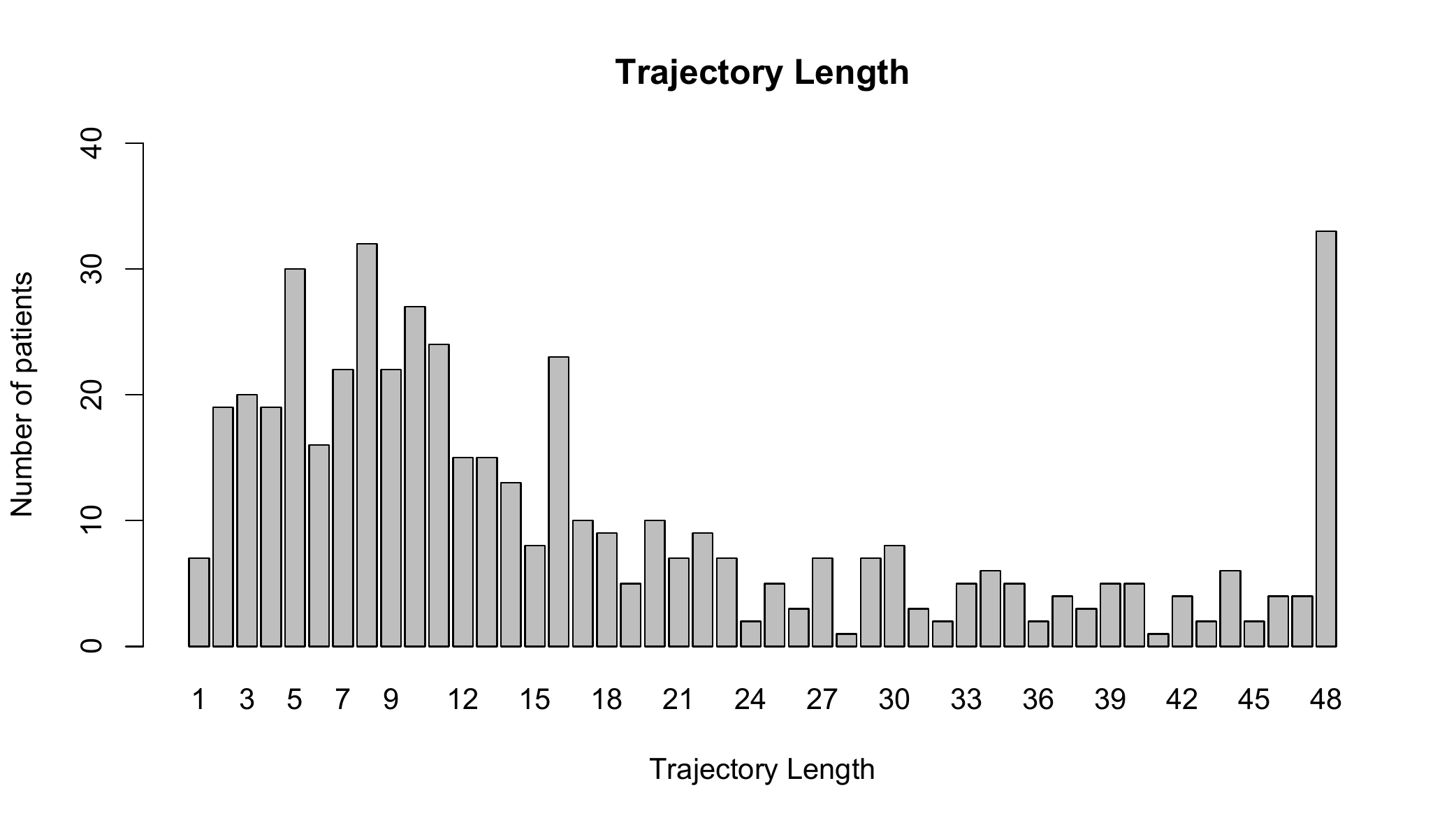}
	\caption{The length distribution of sequences in the dataset}
	\label{fig:length-distrib}
\end{figure}


With 500 patient trajectories, the computation of the whole lattice is infeasible. We are not interested in all possible frequent trajectories, but rather in trajectories which answer medical analysis questions. An expert may know the minimal size of trajectories that he is interested in, i.e. setting the MLP projection. We use the MLP projection of length $2$ and $3$ and take into account that most of the patients has at least 2 hospitalizations in the trajectory (see Figure~\ref{fig:length-distrib}).

%
%

\tikznamedpicture{\ProjIITime}{
	\node [inner xsep=0, inner ysep=0] (n) {\includegraphics[width=\columnwidth]{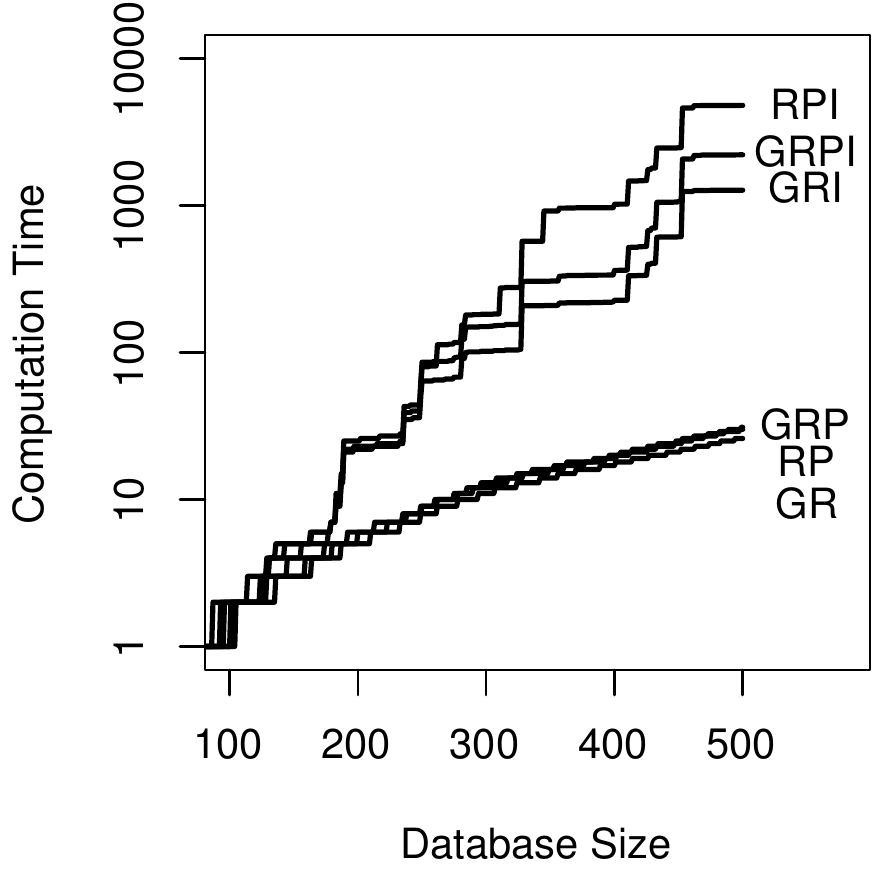}};
	\node [xshift=-2.95cm, yshift = 2.2cm, rotate=90] {(s)};
}
\tikznamedpicture{\ProjIIITime}{
	\node [inner xsep=0, inner ysep=0] (n) {\includegraphics[width=\columnwidth]{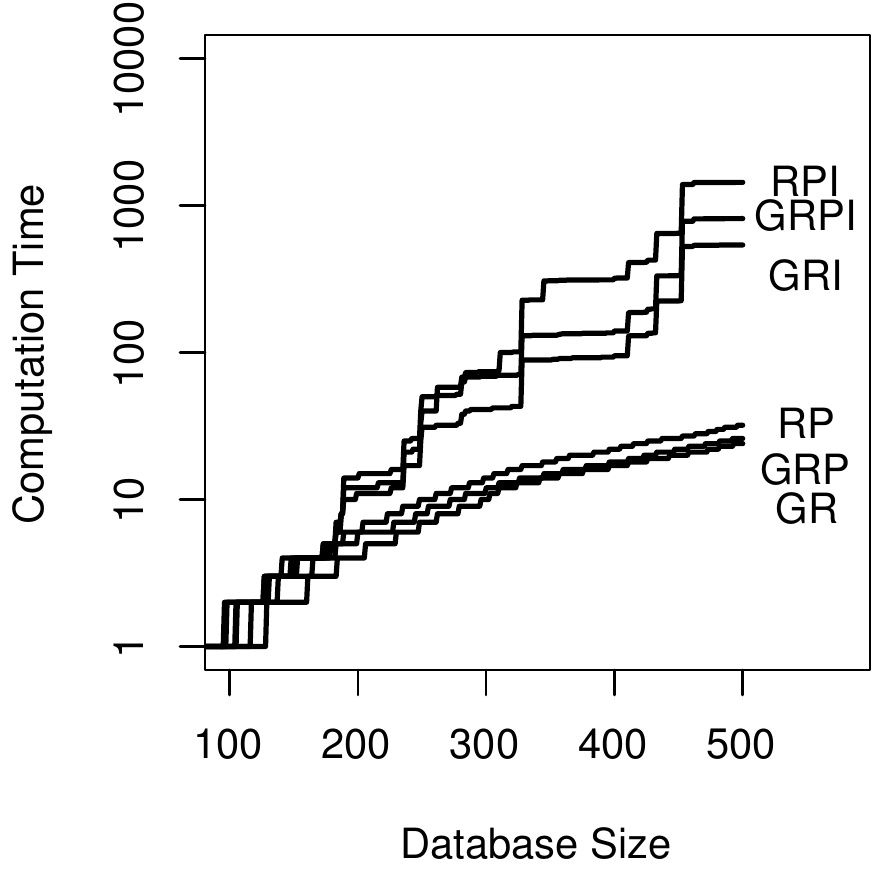}};
	\node [xshift=-2.95cm, yshift = 2.2cm, rotate=90] {(s)};
}
\begin{figure}
	\centering
	\begin{subfigure}[b]{0.45\columnwidth}
		\ProjIITime
		\caption{MLP projection, $\ell=2$}
		\label{fig:computational-stats-ell2}
	\end{subfigure}
	\qquad
	\begin{subfigure}[b]{0.45\columnwidth}
		\ProjIIITime
		\caption{MLP projection, $\ell=3$}
		\label{fig:computational-stats-ell3}
	\end{subfigure}
	\caption{Computational time for different projections}
	\label{fig:computational-stats}
\end{figure}

\begin{figure}
	\centering
	\begin{subfigure}[b]{0.45\columnwidth}
		\includegraphics[width=\columnwidth]{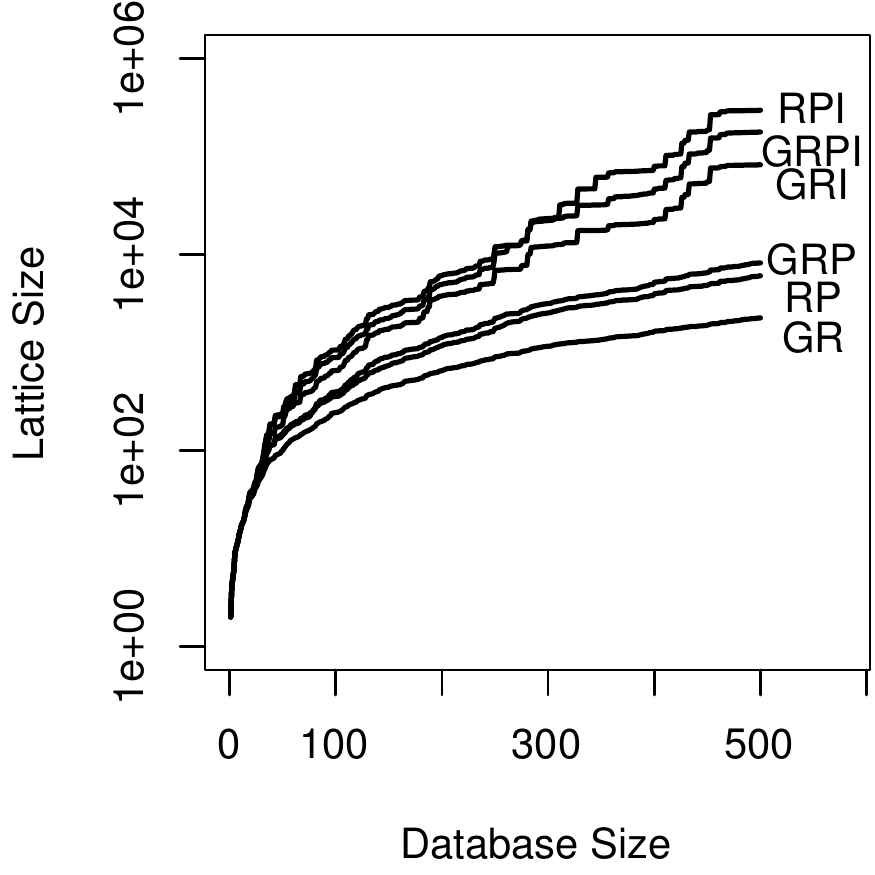}	
		\caption{MLP projection, $\ell=2$}
		\label{fig:lattice-size-stats-ell2}
	\end{subfigure}
	\begin{subfigure}[b]{0.45\columnwidth}
		\includegraphics[width=\columnwidth]{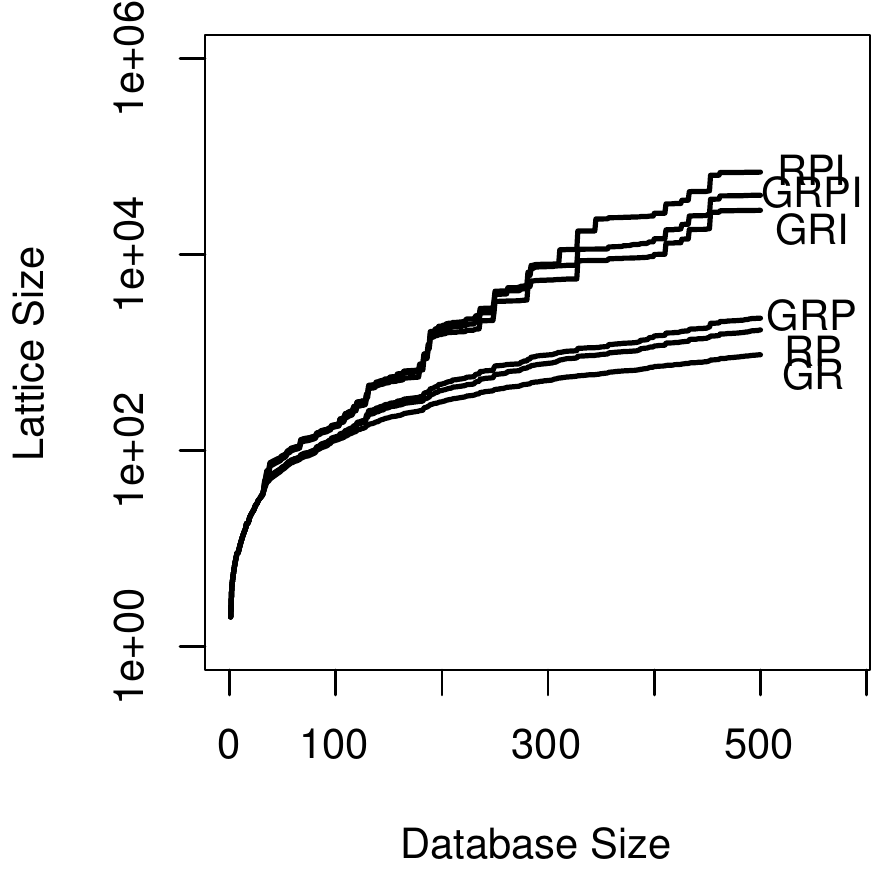}	
		\caption{MLP projection, $\ell=3$}
		\label{fig:lattice-size-stats-ell3}
	\end{subfigure}
	\caption{Lattice size for different projections}
	\label{fig:lattice-size-stats}
\end{figure}

Figure~\ref{fig:computational-stats} shows computational times for different projections as a function of dataset size. Figure~\ref{fig:computational-stats-ell2} shows different alphabet projections for MLP projection with $\ell=2$, while Figure~\ref{fig:computational-stats-ell3} for MLP with $\ell=3$. Every alphabet projection is given by the name of fields, that are considered within the projection: \texttt{G} corresponds to hospital geo-location, \texttt{R} is the reason for a hospitalization, \texttt{P} is medical procedures and \texttt{I} is repetition interval, i.e. the number of consequent hospitalizations with the same reason. We can see from these figures that MLP allows one to save some computational resources with increasing of $\ell$. The difference in computational time between $\ell=2$ and $\ell=3$ projections is significant, especially for time consuming cases. Even a bigger variation can be noticed for the alphabet projections. For example, computation of the \texttt{RPI} projection takes 100 times more resources than any from \texttt{GRP, RP, GR, GRP}.

The same dependency can be seen in Figure~\ref{fig:lattice-size-stats}, where the number of concepts for every projection is shown. Consequently, it is important for an expert to provide a strict projection that allows him to answer his questions in order to save computational time and memory. 

Table~\ref{tbl:result-concepts} shows some interesting concept intents with the corresponding support and ranking w.r.t. concept stability. For example the concept \#1 is obtained under the projection $GR$ (i.e., we consider only hospital and reason), with the intent $\seq{\elnP{Lorraine,C341~Lung\ Cancer}}$, where \texttt{C341 Lung Cancer} is a special kind of lung cancer (malignant neoplasm in Upper lobe, bronchus or lung). This concept is the most stable concept in the lattice for the given projection, and the size of the concept extent is $287$ patients.

\begin{table}
	\tbl{Interesting concepts, for different projections.}{
	\resizebox{\columnwidth}{!}{
		\begin{tabular}{rcccc}
			\toprule
			\# & Projection & Intent & Stab. Rank & Support \\
			\colrule
			$1$ & $GR$  & $\seq{\elnP{Lorraine,C341~Lung\ Cancer}}$ & $1$ & $287$ \\
			$2$ & $GR2$  & $\seq{\elnP{Lorraine,Respiratory\ Disease}\+\elnP{CHU_{Nancy}, Lung\ Cancer}}$ & $26$ & $22$ \\
			$3$ & $GR3$  & $\seq{\elnP{Lorraine,Chemotherapy} \times 4}$ & $1$ & $176$ \\
			$4$ & $RPI3$  & $\begin{aligned}\seq{\elnP{Preparation~for~Chemotherapy,\{Lung~\f&\s{}Radiography\}}\seqdlmt\f \s\elnP{Chemotherapy} \times [3,4]}\end{aligned}$  & $5$ & $36$ \\
			\botrule
		\end{tabular}
	}}
	\label{tbl:result-concepts}
\end{table}

One of the questions that the analyst would like to address here is \textit{``Where do patients stay (i.e. hospital location) during their treatment, and for which reason~?''}. To answer this question, we consider only healthcare institutions and reason fields, requiring both to ``hold'' some information and we use the MLP projection of length 2 and 3 (i.e. projections $GR2$ and $GR3$). Nearly all frequent trajectories show that patients usually are treated in the same region. However, \emph{pattern \#2} obtained under $GR2$ projection shows that, \textit{``22 patients were first admitted in some healthcare institution in Lorraine region for a problem related to the respiratory system and then they were treated for a lung cancer in University Hospital of Nancy.''} 

Another interesting question  is \textit{``What are the sequential relations between hospitalization reasons and the corresponding procedures?''}. 
To answer this question, we are not interested in healthcare institutions. Thus, any alphabet element is projected by substituting healthcare institution field with `*'. As hospitalization reason is important in each hospitalization, any alphabet element without the hospitalization reason is of no use and is projected to the bottom element $\bote$ of the alphabet. Such projections are called $RPI2$ or $RPI3$, meaning that we consider the fields ``Reason'' and ``Procedures'', while the reason should not be empty and the MLP parameter is $2$ or $3$. \emph{Pattern \#4} trivially states that, \textit{``36 patients with lung cancer are hospitalized once for the preparation of chemotherapy and during this hospitalization they undergo lung radiography.  Afterwards, they are hospitalized between 3 and 4 times for chemotherapy.''}

Variability is high in healthcare processes and affects many aspects of healthcare trajectories: patients, medical habits and protocols, healthcare organisation, availability of treatments and settings\ldots{} Mining sequential pattern structures is an interesting approach for finding regularities across one or several dimensions of medical trajectories in a population of patients. It is flexible enough to help healthcare managers to answer specific questions regarding the natural organisation of care processes and to further compare them with expected or desirable processes. The use of taxonomies plays also a key role in finding the right level of description of sequential patterns and reducing
the interpretation overhead.

\section{Related work}\label{sect:related-work}

\citet{Agrawal1995} introduced the problem of mining sequential patterns over large sequential databases. Formally, given a set of sequences, where each sequence is a list of transactions ordered by time and each transaction is a set of items, the problem amounts to find all frequent subsequences that appear a sufficient number of times with a user-specified minimum support threshold (\emph{minsup}). Following the work of Agrawal and Srikant many studies have contributed to the efficient mining of sequential patterns \citep{Mooney2013}. Most of them are based on the antimonotonicity property (used in \emph{Apriori}), which states that any super pattern of a non-frequent pattern cannot be frequent. The main algorithms are PrefixSpan \citep{PeiHPCDH01}, SPADE \citep{Zaki:2001}, SPAM \citep{AyresFGY02}, PSP \citep{MassegliaCP98}, DISC \citep{ChiuWC04}, PAID \citep{YangKW06} and FAST \citep{FAST}. All these algorithms aim at discovering sequential patterns from a set of sequences of itemsets such as customers who frequently buy DVDs of episodes I, II and III of Stars Wars, then buy within 6 months episodes IV, V, VI of the same famous epic space opera.

Many studies about sequential pattern discovery focus on single-dimensional sequences. However, in many situations, the database is multidimensional in the sense that items can be of different nature. For example, a consumer database can hold information such as article price, gender of the customer, location of the store and so on. \citet{PintoHPWCD01} proposed the first work for mining multidimensional sequential patterns. In this work, a {\it multidimensional sequential database} is defined as a schema $(ID,D_{1},...,D_{m},S)$, where $ID$ is a unique customer identifier, $D_{1},...,D_{m}$ are dimensions describing the data and S is the sequence of itemsets. A {\it multidimensional sequence} is defined as a vector $\langle \{d_{1},d_{2},...,d_{m}\} ,S_1,S_2,...,S_l \rangle$ where $ d_{i} \in D_{i}$ for ($i \leqslant m$) and  $S_1,S_2,...,S_l,$ are the itemsets of sequence $S$. For instance, $\langle \{Metz,Male\},\{mp_{1},$ $mp_{2}\},\{mp_{3}\} \rangle$ describes a male patient who underwent  procedures $mp_{1}$ and $mp_{2}$ in Metz and then underwent $mp_{3}$ also in Metz. Here, dimensions remain constant over time, such as the location of the treatment. This means that it is not possible to have a pattern indicating that when the patient underwent procedures $mp_{1}$ and $mp_{2}$ in Metz then he underwent $mp_{3}$ in Nancy. Among other proposals, \citet{Yu2005} proposed two methods AprioriMD and PrefixMDSpan for mining multidimensional sequential patterns in the web domain. This study considers pages, sessions and days as dimensions. Actually, these three different dimensions can be projected into a single dimension corresponding to web pages, gathering web pages visited during a same session and ordering sessions w.r.t the day as order.

In real world applications, each dimension can be represented at different levels of granularity, by using a poset. For example, apples in a market basket analysis can be either described as fruits, fresh food or food. The interest lies in the capacity of extracting more or less general/specific multidimensional sequential patterns and overcome problems of excessive granularity and low support. \citet{Srikant1996} proposed GSP which uses posets for extracting sequential patterns. The basic approach is based on replacing every item with all the ancestors in the poset and then the frequent sequences are generated. This approach is not scalable in a
multidimensional context because the size of the database becomes the product of maximum height of the posets and number of dimensions. 

\citet{Plantevit2010} defined a {\it multidimensional sequence} as an ordered list of multidimensional items, where a { \it multidimensional item} is a tuple $(d_1,...,d_m)$ and $d_{i}$ is an item associated with the $i^{th}$ dimension. They proposed $M^{3}SP$, an approach taking both aspects into account where each dimension is represented at different levels of granularity, by using a poset. $M^{3}SP$ is able to search for sequential patterns with
the most appropriate level of granularity. Their approach is based  on the extraction of the most specific frequent multidimensional items, which are then used as alphabet to rephrase the original database. Then, $M^{3}SP$ uses a standard sequential pattern mining algorithm to extract multidimensional sequential patterns. However, $M^{3}SP$ is not adapted to mine sequential databases,  where sequences are defined over a combination of sets of items and items lying in a poset. Then it is not possible to have a pattern indicating that when the patient went to $uh_{p}$ for a problem of cancer $ca$, where he underwent procedures $mp_{1}$ and $mp_{2}$, then he went to $gh_{l}$ for the same medical problem $ca$, where he underwent $mp_{3}$ ( i.e, $\langle (uh_{p},ca,\{mp_1,mp_2\}),(gh_{l},ca,\{mp_3\})\rangle $).
Our approach allows us to process such kind of patterns and in addition the elements of sequences are even more general. For example, beside multidimensional and multilevel sequences, sequences of graphs fall under our definition. Moreover, frequent subsequence mining gives rise to a lot of subsequences which can be hardly analyzed by an expert. Since our approach is based on Formal Concept Analysis (FCA)~\citep{Ganter1999}, we can use efficient relevance indexes defined in FCA.

This paper is not the first attempt to use FCA for the analysis of sequential data. \citet{Ferre2007} processes sequential datasets based on a ``simple'' alphabet without involving any partial order.  
In~\citet{Casas-Garriga2005} only sequences of itemsets are considered. All closed subsequences are firstly mined and then regrouped by a specialized algorithm in order to obtain a lattice similar to the FCA lattice. This approach was not verified experimentally. Moreover, compared with both approaches, i.e. \citet{Ferre2007} and \citet{Casas-Garriga2005}, our approach suggests a more general definition of sequences and, thanks to pattern structures, there is no `pre-mining' step to find frequent (or maximal) subsequences. This allows us to apply different ``projections'' specializing the request of an expert and simplifying the computations. In addition, in our approach nearly all state-of-the-art FCA algorithms can be used in order to efficiently process a dataset.  

There is a number of approaches that help to analyze medical treatment data. However, the direct comparison of them is hardly possible, because every approach is designed for its own problem. For example, \citep{Tsumoto2014} analyze data of one hospital and provide a different view on the processes within the hospital w.r.t. our approach. Finally and naturally, the most similar approach to our work can be found in~\citep{Egho2014,Egho2014a}, as some authors of the present paper are involved in this alternative work. In~\citep{Egho2014,Egho2014a}, authors mine frequent sequences of the dataset similar to the sequences studied here. However, they approach the complexity of the analysis of such data in a different way. They use a support threshold in order to specify the outcome of the algorithm and do not provide any order in which one can analyze the result. In our case we rely on projections that are usually simpler to incorporate expert knowledge than a support threshold and we give an order (w.r.t. stability of a concept) which can be used to simplify the analysis of the treatment data.


%
\section{Conclusion}\label{sect:conclusion}

In this paper, we have presented a novel approach for analyzing sequential data within the framework of pattern structures, an extension of Formal Concept Analysis dealing with complex data.
It is based on the formalism of sequential pattern structures and projections. Our work complements the general orientations towards \emph{statistically significant patterns} by presenting strong formal results on the notion of interestingness from a concept lattice viewpoint. 
The framework of pattern structures is very flexible and shows some important properties, for example in allowing to reuse state-of-the-art and efficient FCA algorithms.
Using pattern structures leads to the construction of a pattern concept lattice, which does not require the setting of a support threshold, as usually needed in classical sequential pattern mining.
Moreover, the use of projections gives a lot of flexibility especially for mining and interpreting special kinds of patterns (patterns can be proposed at several levels of complexity w.r.t. extraction and interpretation).

Our framework was tested on a real-world dataset with patient hospitalization trajectories.
Interesting patterns answering questions of an expert are extracted and interpreted, showing the feasibility and usefulness of the approach, and the importance of the stability as a pattern-selection procedure.
In particular, projections play an important role here: mainly, they provide means to select patterns of a special interest and they help to save computational time (which could be otherwise very large).

For future work, we are planning to more deeply investigate projections, their potential w.r.t. the types of patterns. It can be interesting to introduce and evaluate the stability measure directly on sequences. Another research direction is mining of association rules or building a Horn approximation~\citep{Balcazar2005} from the stable part of the pattern lattice or stable sequences. Finally, as discussed above, a precise study combining frequent subsequence mining and FCA-based approaches should be carried out.

%
%
\section*{Acknowledgments}
The fourth co-author was supported within the framework of the Basic Research Program at National Research University Higher School of Economics (Moscow).

\section*{Notes on contributors}
{\small
	\begin{description}
		\item[Aleksey Buzmakov]
			is a PhD student in Informatics at Universit\'{e} de Lorraine (Van\-doevre les Nance, France). He holds master and bachelor degree in applied mathematics and physics from Moscow Institute of Physics and Technology. His research interest includes data mining and artificial intelligences. In particular he works with Formal Concept Analysis and Pattern Structure in order to mine complex data such as sequences or graphs.
		\item[Elias Egho]
			is a Post Doctoral Researcher in Orange Labs (France Telecom Research and Development) with Profiling \& Data Mining team. In 2014, he received a PhD degree in Computer Science from University of Lorraine, Nancy, France in LORIA-INRIA Nancy Grand Est laboratory. His main research interest is mining sequential patterns for detection and classification of sequential data.
		\item[Nicolas Jay]
			is a professor of biostatistics and medical informatics at the Université de Lorraine. His research interests include medical knowledge representation and knowledge discovery in medical databases, with applications to patient trajectory analysis. He works as a public health physician at the University Hospital of Nancy.
	\item[Sergei O. Kuznetsov]
			is a professor of the National Research University Higher School of Economics (HSE), Moscow, where he is the head of department of data analysis and artificial intelligence. He defended habilitation thesis (``Doctor of Science'') at the Computer Center of the Russian Academy of Sciences (Moscow, Russia) in 2002. He holds the ``Candidate of Science'' degree (PhD equivalent) from VINITI (Moscow, Russia) since 1990. His research interests include mathematical models, algorithms and algorithmic problems of machine learning, formal concept analysis, data mining, and knowledge discovery.
		\item[Amedeo Napoli]
			is a CNRS senior scientist (DR CNRS) and the scientific leader of the Orpailleur research team at LORIA/Inria Laboratory in Nancy. His scientific interests are knowledge discovery (pattern mining and Formal Concept Analysis) and knowledge representation (ontology engineering). He is involved in many national and international research projects with applications in agronomy, biology, chemistry, and medicine.
		\item[Chedy Raïsi] 
			received his PhD in Computer Science from the University of Montpellier and the Ecole des Mines d'Alès in July 2008. He is currently a research scientist (``Charg\'{e} de recherche 1'') at the Institut "National de Recherche en Informatique et en Automatique" (INRIA) in France. His research interests includes pattern mining and privacy-preserving data analysis.
	\end{description}
}

\bibliographystyle{plainnat}
\putBibliography

\end{document}